\DeclarePairedDelimiterX{\infdivx}[2]{(}{)}{%
  #1\;\delimsize\|\;#2%
}
\newcommand{\kl}{D_{\mathrm{KL}}\infdivx}
\newtheorem{assumption}{Assumption}
\newtheorem{definition}{Definition}
\newtheorem{theorem}{Theorem}[section]
\newtheorem{lemma}[theorem]{Lemma}
\newcommand{\figleft}{{\em (Left)}}
\newcommand{\figright}{{\em (Right)}}
\def\eqref#1{equation~\ref{#1}}
\def\1{\bm{1}}
\DeclareMathAlphabet{\mathsfit}{\encodingdefault}{\sfdefault}{m}{sl}
\SetMathAlphabet{\mathsfit}{bold}{\encodingdefault}{\sfdefault}{bx}{n}
\def\gA{{\mathcal{A}}}
\def\gD{{\mathcal{D}}}
\def\gH{{\mathcal{H}}}
\def\gM{{\mathcal{M}}}
\def\gN{{\mathcal{N}}}
\def\gO{{\mathcal{O}}}
\def\gS{{\mathcal{S}}}
\def\gU{{\mathcal{U}}}
\newcommand{\E}{\mathbb{E}}
\newcommand{\Var}{\mathrm{Var}}
\DeclareMathOperator*{\argmax}{arg\,max}
\title{Off-Dynamics Reinforcement Learning:\\ \Large Training for Transfer with Domain Classifiers}
\author{%
  Benjamin Eysenbach\thanks{Equal contribution.} \\
  CMU, Google Brain\\
  \texttt{beysenba@cs.cmu.edu} \\
  \And
  Shreyas Chaudhari$^*$ \\
  CMU \\
  \texttt{shreyaschaudhari@cmu.edu} \\
  \And
  Swapnil Asawa$^*$ \\
  University of Pittsburgh \\
  \texttt{swa12@pitt.edu} \\
  \AND
  Sergey Levine \\
  UC Berkeley, Google Brain
 \And
  Ruslan Salakhutinov \\
  CMU \\
}
\begin{document}

\maketitle

\begin{abstract}
We propose a simple, practical, and intuitive approach for domain adaptation in reinforcement learning. Our approach stems from the idea that the agent's experience in the source domain should look similar to its experience in the target domain. Building off of a probabilistic view of RL, we achieve this goal by compensating for the difference in dynamics by modifying the reward function. This modified reward function is simple to estimate by learning auxiliary classifiers that distinguish source-domain transitions from target-domain transitions. Intuitively, the agent is penalized for transitions that would indicate that the agent is interacting with the source domain, rather than the target domain. Formally, we prove that applying our method in the source domain is guaranteed to obtain a near-optimal policy for the target domain, provided that the source and target domains satisfy a lightweight assumption. Our approach is applicable to domains with continuous states and actions and does not require learning an explicit model of the dynamics. On discrete and continuous control tasks, we illustrate the mechanics of our approach and demonstrate its scalability to high-dimensional~tasks.
\end{abstract}

\section{Introduction}

Reinforcement learning (RL) can automate the acquisition of complex behavioral policies through real-world trial-and-error experimentation. However, many domains where we would like to learn policies are not amenable to such trial-and-error learning, because the errors are too costly: from autonomous driving to flying airplanes to devising medical treatment plans, safety-critical RL problems necessitate some type of \emph{transfer learning}, where a safer source domain, such as a simulator, is used to train a policy that can then function effectively in a target domain. In this paper, we examine a specific transfer learning scenario that we call domain adaptation, by analogy to domain adaptation problems in computer vision~\citep{csurka2017domain}, where the training process in a source domain can be modified so that the resulting policy is effective in a given target domain.

\begin{wrapfigure}[13]{r}{0.5\textwidth}
    \centering
    \vspace{-1.em}
    \includegraphics[width=\linewidth]{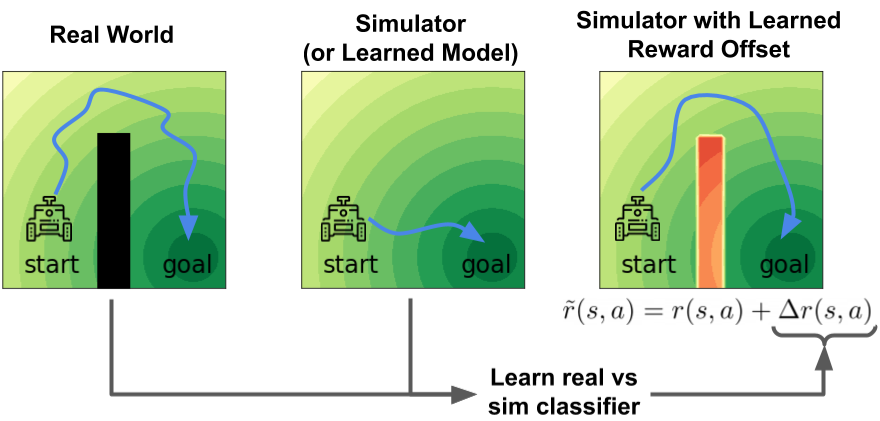}
    \vspace{-1.6em}
    \caption{\footnotesize Our method acquires a policy for the target domain by practicing in the source domain using a (learned) modified reward function.} %
    \label{fig:teaser}
\end{wrapfigure}
RL algorithms today require a large amount of experience in the \emph{target domain}.
However, for many tasks
we may have access to a different but structurally similar \emph{source domain}. While the source domain has different dynamics than the target domain, experience in the source domain is much cheaper to collect.
However, transferring policies from one domain to another is challenging because strategies which are effective in the source domain may not be effective in the target domain.
For example, aggressive driving may work well on a dry racetrack but fail catastrophically on an icy road.
While prior work has studied the domain adaptation of \emph{observations} in RL~\citep{bousmalis2018using, ganin2016domain, higgins2017darla}, it ignores the domain adaptation of the \emph{dynamics}.%

This paper presents a simple %
approach for domain adaptation in RL, illustrated in Fig.~\ref{fig:teaser}.
Our main idea is that the agent's experience in the source domain should look similar to its experience in the target domain. Building off of a probabilistic view of RL, we formally show that we can achieve this goal by \emph{compensating for the difference in dynamics by modifying the reward function}. This modified reward function is simple to estimate by learning auxiliary classifiers that distinguish source-domain transitions from target-domain transitions.
Because our method learns a classifier, rather than a dynamics model, we expect it to handle high-dimensional tasks better than model-based methods, a conjecture supported by experiments on the 111-dimensional Ant task.
Unlike prior work based on similar intuition~\citep{koos2012transferability, wulfmeier2017mutual}, a key contribution of our work is a formal guarantee that our method yields a near-optimal policy for the target domain.

The main contribution of this work is an algorithm for domain adaptation to dynamics changes in RL, based on the idea of compensating for differences in dynamics by modifying the reward function.
We call this algorithm Domain Adaptation with Rewards from Classifiers, or DARC for short.
DARC does not estimate transition probabilities, but rather modifies the reward function using a pair of classifiers. We formally analyze the conditions under which our method produces near-optimal policies for the target domain. On a range of discrete and continuous control tasks, we both illustrate the mechanics of our approach and demonstrate its scalability to higher-dimensional~tasks.

\section{Related Work}
\label{sec:related-work}

While our work will focus on domain adaptation applied to RL, we start by reviewing more general ideas in domain adaptation,
and defer to~\citet{kouw2019review} for a recent review of the field.
Two common approaches to domain adaptation are importance weighting and domain-agnostic features. \emph{Importance-weighting methods} (e.g.,~\citep{zadrozny2004learning, cortes2014domain, lipton2018detecting}) estimate the likelihood ratio of examples under the target domain versus the source domain, and use this ratio to re-weight examples sampled from the source domain.
Similar to prior work on importance weighting~\citep{bickel2007discriminative, sonderby2016amortised, mohamed2016learning, uehara2016generative}, our method will use a classifier to estimate a probability ratio.
Since we will need to estimate the density ratio of conditional distributions (transition probabilities), we will learn two classifiers. %
Importantly, we will use the logarithm of the density ratio to modify the reward function instead of weighting samples by the density ratio, which is often numerically unstable (see, e.g.,~\citet[\S3]{schulman2017proximal}) and led to poor performance in our experiments.

Prior methods for applying domain adaptation to RL include approaches based on system identification, domain randomization, and observation adaptation. Perhaps the most established approach, system identification~\citep{ljung1999system}, uses observed data to tune the parameters of a simulator~\citep{feldbaum1960dual, werbos1989neural, wittenmark1995adaptive, ross2012agnostic, tan2016simulation, zhu2017fast, farchy2013humanoid}
More recent work has successfully used this strategy to bridge the sim2real gap~\citep{chebotar2019closing, rajeswaran2016epopt}.
Closely related is work on online system identification and meta-learning, which directly uses the inferred system parameters to update the policy~\citep{yu2017preparing, clavera2018learning, tanaskovic2013adaptive, sastry1989adaptive}. However, these approaches typically require either a model of the environment or a manually-specified distribution over potential test-time dynamics, requirements that our method will lift.
Another approach, \emph{domain randomization}, randomly samples the parameters of the source domain and then finds the best policy for this randomized environment~\citep{sadeghi2016cad2rl, tobin2017domain, peng2018sim, cutler2014reinforcement}. While often effective, this method is sensitive to the choice of which parameters are randomized, and the distributions from which these simulator parameters are sampled.
A third approach, \emph{observation adaptation}, modifies the observations of the source domain to appear similar to those in the target domain~\citep{fernando2013unsupervised, hoffman2016fcns, wulfmeier2017addressing}. While this approach has been successfully applied to video games~\citep{gamrian2018transfer} and robot manipulation~\citep{bousmalis2018using}, it ignores the fact that the source and target domains may have differing dynamics.

Finally, our work is similar to prior work on transfer learning~\citep{taylor2009transfer} and meta-learning in RL, but makes less strict assumptions than most prior work. For example, most work on meta-RL~\citep{killian2017robust, duan2016rl, mishra2017simple, rakelly2019efficient} and some work on transfer learning~\citep{perkins1999using, tanaka2003multitask, sunmola2006model} assume that the agent has access to many source tasks, all drawn from the same distribution as the target task. ~\citet{selfridge1985training, madden2004transfer} assume a manually-specified curriculum of tasks, ~\citet{ravindran2004algebraic} assume that the source and target domains have the same dynamics locally, and \citet{sherstov2005improving} assume that the set of actions that are useful in the source domain is the same as the set of actions that will be useful in the target domain. Our method does not require these assumptions, allowing it to successfully learn in settings where these prior works would fail. For example, the assumption of~\citet{sherstov2005improving} is violated in our experiments with broken robots: actions which move a joint are useful in the source domain (where the robot is fully-function) but not useful in the target domain (where that joint is disabled). Our method will significantly outperform an importance weighting baseline~\citep{lazaric2008knowledge}.
Unlike~\citet{vemula2020planning}, our method does not require learning a dynamics model and is applicable to stochastic environments and those with continuous states and actions. Our algorithm bears a resemblance to that in~\citet{wulfmeier2017mutual}, but a crucial algorithmic difference allows us to prove that our method acquires a near-optimal policy in the target domain, and also leads to improved performance empirically. %

The theoretical derivation of our method is inspired by prior work which formulates control as a problem of probabilistic inference (e.g.,~\citep{toussaint2009robot, rawlik2013stochastic, levine2018learning}). %
Algorithms for model-based RL (e.g.,~\citep{deisenroth2011pilco, hafner2018learning,janner2019trust}) %
and off-policy RL (e.g.,~\citep{munos2016safe, fujimoto2018off, dann2014policy, dudik2011doubly} similarly aim to improve the sample efficiency of RL, but do use the source domain to accelerate learning. Our method is applicable to any maximum entropy RL algorithm, including on-policy~\citep{song2019v}, off-policy~\citep{abdolmaleki2018maximum, haarnoja2018soft}, and model-based~\citep{janner2019trust, williams2015model} algorithms. We will use SAC~\citep{haarnoja2018soft} in our experiments and compare against model-based baselines.

\section{Preliminaries}
\label{sec:prelim}

In this section, we introduce notation and formally define domain adaptation for RL. Our problem setting will consider two MDPs: $\gM_\text{source}$ represents the source domain (e.g., a practice facility, simulator, or learned approximate model of the target domain) while $\gM_\text{target}$ represents a the target domain. We assume that the two domains have the same state space $\gS$, action space $\gA$, reward function $r$, and initially state distribution $p_1(s_1)$; the only difference between the domains is the dynamics, $p_{\text{source}}(s_{t+1} \mid s_t, a_t)$ and $p_{\text{target}}(s_{t+1} \mid s_t, a_t)$. We will learn a Markovian policy $\pi_\theta(a \mid s)$, parametrized by $\theta$. Our objective is to learn a policy $\pi$ that maximizes the expected discounted sum of rewards on $\gM_\text{target}$, $\E_{\pi, \gM_\text{target}}[\sum_t \gamma^t r(s_t, a_t)]$. We now formally define our problem setting:

\begin{definition} \textbf{Domain Adaptation for RL} is the problem of using interactions in the source MDP $\gM_{\text{source}}$ together with a small number of interactions in the target MDP $\gM_\text{target}$ to acquire a policy that achieves high reward in the target MDP, $\gM_\text{target}$.
\end{definition}
We will assume every transition with non-zero probability in the target domain will have non-zero probability in the source domain:
\begin{equation}
    p_\text{target}(s_{t+1} \mid s_t, a_t) > 0 \implies p_\text{source}(s_{t+1} \mid s_t, a_t) > 0 \qquad \text{for all }s_t, s_{t+1} \in \gS, a_t \in \gA. \label{eq:completeness}
\end{equation}
This assumption is common in work on importance sampling~\citep[\S12.2.2]{koller2009probabilistic}, and the converse need not hold: transitions that are possible in the source domain need not be possible in the target domain. If this assumption did not hold, then the optimal policy for the target domain might involve behaviors that are not possible in the source domain, so it is unclear how one could learn a near-optimal policy by practicing in the source domain.

\section{A Variational Perspective on Domain Adaptation in RL}
\label{sec:variational}

The probabilistic inference interpretation of RL~\citep{kappen2005path, todorov2007linearly, toussaint2009robot, ziebart2010modeling, rawlik2013stochastic, levine2018reinforcement} treats the reward function as defining a desired distribution over trajectories. The agent's task is to sample from this distribution by picking trajectories with probability proportional to their exponentiated reward. This section will reinterpret this model in the context of domain transfer, %
showing that domain adaptation of \emph{dynamics} can be done by modifying the \emph{rewards}.
To apply this model to domain adaptation, define $p(\tau)$ as the desired distribution over trajectories in the target domain,
\begin{equation*}
    p(\tau) \propto p_1(s_1) \bigg(\prod_t p_{\text{target}}(s_{t+1} \mid s_t, a_t) \bigg) \exp \bigg( \sum_t r(s_t, a_t) \bigg),
\end{equation*}
and $q(\tau)$ as our agent's distribution over trajectories in the source domain,
\begin{equation*}
    q(\tau) = p_1(s_1) \prod_t p_{\text{source}}(s_{t+1} \mid s_t, a_t) \pi_\theta(a_t \mid s_t).
\end{equation*}
As noted in Section~\ref{sec:prelim}, we assume both trajectory distributions have the same initial state distribution.
Our aim is to learn a policy whose behavior in the source domain both receives high reward and has high likelihood under the target domain dynamics. We codify this objective by minimizing the reverse KL divergence between these two distributions:
\begin{equation}
    \min_{\pi(a \mid s)} \kl{q}{p}= - \E_{p_\text{source}}\bigg[ \sum_t r(s_t, a_t) + \gH_\pi[a_t \mid s_t] + \Delta r(s_{t+1}, s_t, a_t) \bigg] + c, \label{eq:kl}
\end{equation}
where
\begin{equation*}
    \Delta r(s_{t+1}, s_t, a_t) \triangleq \log p(s_{t+1} \mid s_t, a_t) - \log q(s_{t+1} \mid s_t, a_t).
\end{equation*}
The constant $c$ is the partition function of $p(\tau)$, which is independent of the policy and dynamics. 
While $\Delta r$ is defined in terms of transition probabilities, in Sec.~\ref{sec:method} we show how to estimate $\Delta r$ by learning a classifier. We therefore call our method \emph{domain adaptation with rewards from classifiers} (DARC), and will use $\pi_\text{DARC}^*$ to refer to the policy that maximizes the objective in Eq.~\ref{eq:kl}.

Where the source and target dynamics are equal, the correction term $\Delta r$ is zero and we recover maximum entropy~RL~\citep{ziebart2010modeling, todorov2007linearly}.
The reward correction is different from prior work that adds $\log \beta(a \mid s)$ to the reward to regularize the policy to be close to the behavior policy $\beta$ (e.g.,~\citet{jaques2017sequence, abdolmaleki2018maximum}). %
In the case where the source dynamics are \emph{not} equal to the true dynamics, this objective is not the same as maximum entropy RL on trajectories sampled from the source domain. Instead, this objective suggests a corrective term $\Delta r$ that should be added to the reward function to account for the discrepancy between the source and target dynamics. The correction term, $\Delta r$, is quite intuitive. If a transition $(s_t, a_t, s_{t+1})$ has equal probability in the source and target domains, then $\Delta r(s_t, a_t) = 0$ so no correction is applied. For transitions that are likely in the source but are unlikely in the target domain, $\Delta r < 0$, the agent is penalized for ``exploiting'' inaccuracies or discrepancies in the source domain by taking these transitions. For the example environment in Figure~\ref{fig:teaser}, transitions through the center of the environment are blocked in the target domain but not in the source domain. For these transitions, $\Delta r$ would serve as a large penalty, discouraging the agent from taking these transitions and instead learning to navigate around the wall. Appendix~\ref{appendix:perspectives} presents additional interpretations of $\Delta r$ in terms of coding theory, mutual information, and a constraint on the discrepancy between the source and target dynamics. Appendix~\ref{appendix:observation-model} discusses how prior work on domain agnostic feature learning can be viewed as a special case of our framework.

\subsection{Theoretical Guarantees}
\label{sec:guarantees}

We now analyze when maximizing the modified reward $r + \Delta r$ in the source domain yields a near-optimal policy for the target domain. Our proof relies on the following lightweight assumption:

\begin{assumption} \label{assumption:opt}
Let $\pi^* = \argmax_\pi \E_p\left[ \sum r(s_t, a_t) \right]$ be the reward-maximizing policy in the target domain. Then the expected reward in the source and target domains differs by at most $2 R_\text{max} \sqrt{\epsilon / 2}$:
\begin{equation*}
 \left| \E_{p_{\pi^*, \text{source}}}\left[ \sum r(s_t, a_t) \right] - \E_{\pi^*, p_{\text{target}}}\left[ \sum r(s_t, a_t) \right] \right| \le 2 R_\text{max} \sqrt{\epsilon / 2}.
\end{equation*}
\end{assumption}
The variable $R_\text{max}$ refers to the maximum entropy-regularized return of any trajectory. This assumption says that the optimal policy in the target domain is still a good policy for the source domain, and its expected reward is similar in both domains. We do not require that the opposite be true: the optimal policy in the source domain does not need to receive high reward in the target domain.
If there are multiple optimal policies, we only require that this assumption hold for one of them.
We now state our main result:

\begin{theorem} \label{thm:main}
Let $\pi_\text{DARC}^*$ be the policy that maximizes the modified (entropy-regularized) reward in the source domain, let $\pi^*$ be the policy that maximizes the (unmodified, entropy-regularized) reward in the target domain, and assume that $\pi^*$ satisfies Assumption~\ref{assumption:opt}.
Then the following holds:
\begin{equation*}
    \E_{p_\text{target}, \pi_\text{DARC}^*} \left[ \sum r(s_t, a_t) + \gH[a_t \mid s_t] \right] \ge \E_{p_\text{target}, \pi^*} \left[ \sum r(s_t, a_t) + \gH[a_t \mid s_t] \right] - 4 R_\text{max}\sqrt{\epsilon / 2}.
\end{equation*}
\end{theorem}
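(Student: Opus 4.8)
The plan is to translate all three returns in the statement into Kullback--Leibler divergences against the single fixed target distribution $p(\tau)$, and then argue through the source domain. Two elementary identities drive this. For any policy $\pi$, write $p_{\pi,\text{source}}(\tau)=p_1(s_1)\prod_t p_\text{source}(s_{t+1}\mid s_t,a_t)\,\pi(a_t\mid s_t)$ and $p_{\pi,\text{target}}(\tau)=p_1(s_1)\prod_t p_\text{target}(s_{t+1}\mid s_t,a_t)\,\pi(a_t\mid s_t)$ for its rollout distributions in the two domains. Since the dynamics factors of $p_{\pi,\text{target}}$ and $p$ cancel, $\kl{p_{\pi,\text{target}}}{p}=c-\E_{p_\text{target},\pi}\!\left[\sum_t r(s_t,a_t)+\gH[a_t\mid s_t]\right]$. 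And Eq.~\ref{eq:kl} is exactly the statement that $\kl{p_{\pi,\text{source}}}{p}=c-\E_{p_\text{source},\pi}\!\left[\sum_t r(s_t,a_t)+\gH[a_t\mid s_t]+\Delta r(s_{t+1},s_t,a_t)\right]$, so $\pi_\text{DARC}^*$ is by definition the minimizer of $\kl{p_{\pi,\text{source}}}{p}$ over $\pi$. Cancelling $c$, the theorem becomes equivalent to $\kl{p_{\pi_\text{DARC}^*,\text{target}}}{p}\le\kl{p_{\pi^*,\text{target}}}{p}+4R_\text{max}\sqrt{\epsilon/2}$.

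I would then prove this through the chain
\[
\kl{p_{\pi_\text{DARC}^*,\text{target}}}{p}\;\le\;\kl{p_{\pi_\text{DARC}^*,\text{source}}}{p}+2R_\text{max}\sqrt{\epsilon/2}\;\le\;\kl{p_{\pi^*,\text{source}}}{p}+2R_\text{max}\sqrt{\epsilon/2}\;\le\;\kl{p_{\pi^*,\text{target}}}{p}+4R_\text{max}\sqrt{\epsilon/2}.
\]
The middle inequality is immediate from optimality of $\pi_\text{DARC}^*$. The first and last inequalities each ``swap'' the target-domain rollout of a fixed policy for its source-domain rollout. The tool for such a swap is the identity $\E_{p_{\pi,\text{source}}}\!\left[\sum_t\Delta r\right]=-\kl{p_{\pi,\text{source}}}{p_{\pi,\text{target}}}$ (because $\sum_t\Delta r=\log p_{\pi,\text{target}}(\tau)-\log p_{\pi,\text{source}}(\tau)$ pointwise), combined with Pinsker's inequality $\|p_{\pi,\text{source}}-p_{\pi,\text{target}}\|_\text{TV}\le\sqrt{\tfrac12\kl{p_{\pi,\text{source}}}{p_{\pi,\text{target}}}}$ and the bound $R_\text{max}$ on the entropy-regularized return, which together control $\bigl|\E_{p_{\pi,\text{source}}}[\sum_t(r+\gH)]-\E_{p_{\pi,\text{target}}}[\sum_t(r+\gH)]\bigr|$. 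For the last inequality (the $\pi^*$ side) this source-versus-target discrepancy is handed to us by Assumption~\ref{assumption:opt}, which is precisely a $2R_\text{max}\sqrt{\epsilon/2}$ bound on it for $\pi=\pi^*$ (stated there for the reward alone, but the same argument applies to the entropy-regularized return since $\pi^*$ is held fixed and returns are bounded by $R_\text{max}$).

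The main obstacle is the first inequality, for $\pi=\pi_\text{DARC}^*$: Assumption~\ref{assumption:opt} only constrains $\pi^*$, so the source-versus-target discrepancy of $\pi_\text{DARC}^*$ is not bounded by hypothesis and must be extracted from its optimality. The key observation is that the modified source objective equals $\E_{p_{\pi,\text{source}}}[\sum_t(r+\gH)]-\kl{p_{\pi,\text{source}}}{p_{\pi,\text{target}}}$, i.e., it already subtracts off exactly the discrepancy we need to control; because $\pi_\text{DARC}^*$ maximizes this objective and therefore does at least as well as $\pi^*$ (whose discrepancy penalty is small by Assumption~\ref{assumption:opt}), the penalty $\kl{p_{\pi_\text{DARC}^*,\text{source}}}{p_{\pi_\text{DARC}^*,\text{target}}}$ incurred by $\pi_\text{DARC}^*$ cannot be large, and feeding this back through Pinsker closes the step. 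Getting the constants to line up — so that the combined slack is $4R_\text{max}\sqrt{\epsilon/2}$ and no stray $R_\text{max}$-sized terms leak out of the Pinsker and optimality bookkeeping — is the delicate part of the argument.
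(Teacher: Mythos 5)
Your overall architecture coincides with the paper's: the proof in Appendix~\ref{appendix:proofs} is exactly the three-step chain you describe (swap source for target for $\pi_\text{DARC}^*$, compare the two policies on the source domain via optimality, swap back for $\pi^*$), with each swap costing $2R_\text{max}\sqrt{\epsilon/2}$ through a H\"older-plus-Pinsker bound that is identical to your Pinsker step (the paper's Lemma~\ref{lemma:dynamics-rewards}). Recasting the three returns as KL divergences to the fixed $p(\tau)$ is a cosmetic repackaging of the same quantities, and your identity $\E_{p_{\pi,\text{source}}}[\sum_t \Delta r] = -\kl{p_{\pi,\text{source}}(\tau)}{p_{\pi,\text{target}}(\tau)}$ is correct.

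The genuine gap is in your first inequality. You argue that because the modified source objective equals $\E_{p_\text{source},\pi}[\sum_t r + \gH_\pi] - \kl{p_{\pi,\text{source}}}{p_{\pi,\text{target}}}$ and $\pi_\text{DARC}^*$ maximizes it, the penalty incurred by $\pi_\text{DARC}^*$ ``cannot be large.'' But optimality against $\pi^*$ only yields
\begin{equation*}
\kl{p_{\pi_\text{DARC}^*,\text{source}}}{p_{\pi_\text{DARC}^*,\text{target}}} \;\le\; \kl{p_{\pi^*,\text{source}}}{p_{\pi^*,\text{target}}} \;+\; \Big(\E_{p_\text{source},\pi_\text{DARC}^*}\Big[\textstyle\sum_t r + \gH\Big] - \E_{p_\text{source},\pi^*}\Big[\textstyle\sum_t r + \gH\Big]\Big),
\end{equation*}
and the second term is an uncontrolled source-return advantage that can be of order $R_\text{max}$ --- this is precisely the regime the method is designed for, where exploiting the source dynamics buys a lot of reward. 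Feeding an $O(R_\text{max})$ KL through Pinsker gives an $O(R_\text{max}^{3/2})$ slack, not $2R_\text{max}\sqrt{\epsilon/2}$. The paper closes this hole not by deriving the bound but by \emph{defining} $\epsilon$: its first lemma invokes the KKT conditions to assert that the penalized problem is equivalent to the constrained problem $\max_{\pi\in\Pi_\text{no exploit}}$ with $\Pi_\text{no exploit}$ the set of policies whose expected dynamics discrepancy is at most $\epsilon$, so that $\pi_\text{DARC}^*$ satisfies the $\epsilon$-discrepancy bound by construction --- which is why the theorem statement defers the definition of $\epsilon$ to the appendix. To complete your argument you must either adopt that definition of $\epsilon$ or add a hypothesis bounding $\pi_\text{DARC}^*$'s own source--target discrepancy. (A smaller, shared looseness: Assumption~\ref{assumption:opt} bounds a return gap, not a KL, so strictly it does not place $\pi^*$ inside the constraint set either; the paper treats it as if it did, and your handling of the $\pi^*$ side is no worse.)
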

See Appendix~\ref{appendix:proofs} for the proof and definition of $\epsilon$. This result says that $\pi_\text{DARC}^*$ attains near-optimal (entropy-regularized) reward on the target domain. Thus, we can expect that modifying the reward function should allow us to adapt to different dynamics. The next section will present a practical algorithm for acquiring $\pi_\text{DARC}^*$ by estimating and effectively maximizing the modified reward in the source domain.

\section{Domain Adaptation in RL with a Learned Reward}
\label{sec:method}

\begin{wrapfigure}[12]{r}{0.5\textwidth}
    \centering
    \includegraphics[width=\linewidth]{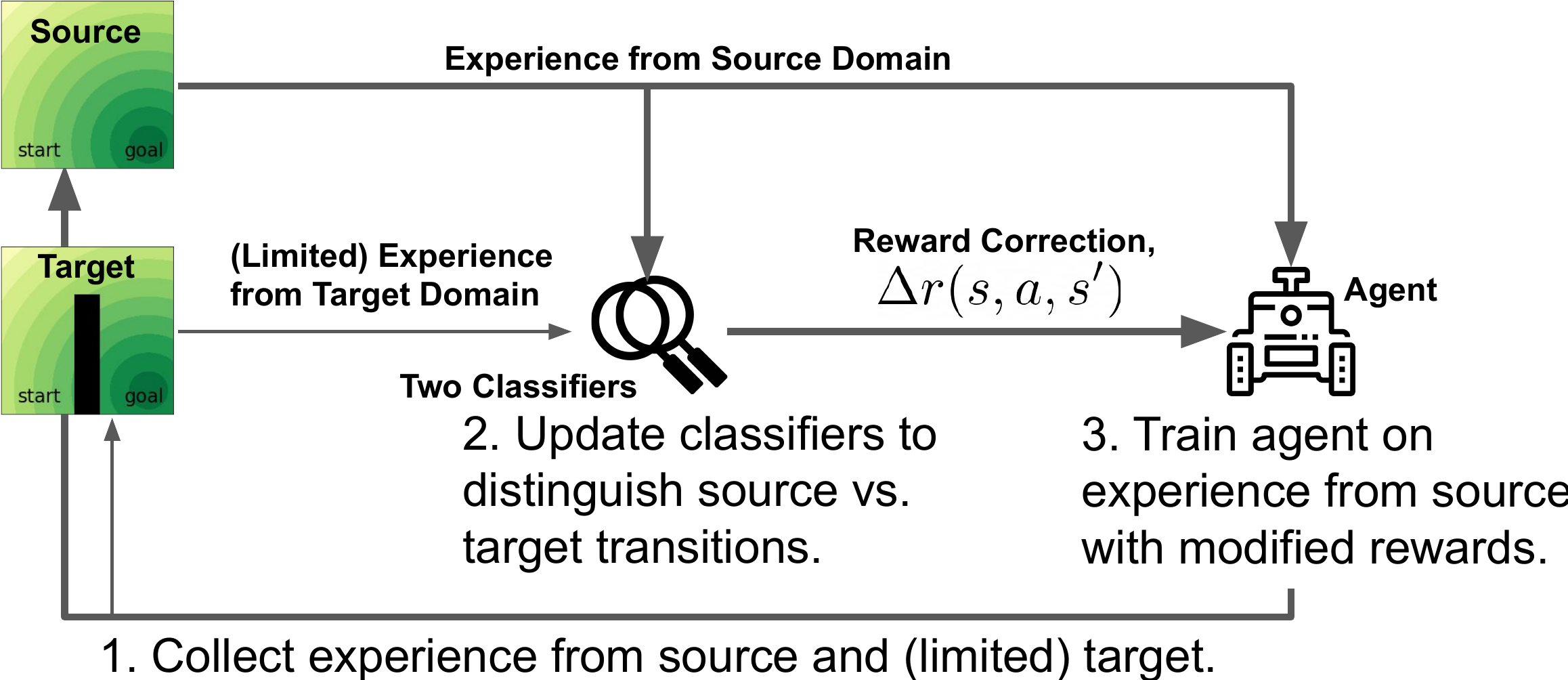}
    \vspace{-1em}
    \caption{Block diagram of DARC (Alg.~\ref{alg:odrl})}
    \label{fig:alg-figure}
\end{wrapfigure}
The variational perspective on model-based RL in the previous section suggests that we should modify the reward in the source domain by adding $\Delta r$.
In this section we develop a practical algorithm for off-dynamics RL by showing how $\Delta r$ can be estimated without learning an explicit dynamics model. %

\begin{algorithm}[t]
\caption{Domain Adaptation with Rewards from Classifiers [DARC]}\label{alg:odrl}
\begin{algorithmic}[1]
\State \textbf{Input:} source MDP $\gM_{\text{source}}$ and target $\gM_{\text{target}}$; ratio $r$ of experience from source vs. target.
\State \textbf{Initialize:} replay buffers for source and target transitions, $\gD_{\text{source}}, \gD_{\text{target}}$; policy $\pi$; parameters $\theta = (\theta_{\text{SAS}}, \theta_{\text{SA}})$ for classifiers $q_{\theta_{\text{SAS}}}(\text{target} \mid s_t, a_t, s_{t+1})$ and $q_{\theta_{\text{SAS}}}(\text{target} \mid s_t, a_t)$. 
\For{$t = 1, \cdots, \text{num iterations}$}
\State $\gD_\text{source} \gets \gD_\text{source} \cup \Call{Rollout}{\pi, \gM_{\text{source}}}$ \Comment{Collect source data.}
\If{$t \mod r = 0$} \Comment{Periodically, collect target data.} \label{line:mod-r}
\State $\gD_\text{target} \gets \gD_\text{target} \cup \Call{Rollout}{\pi, \gM_{\text{target}}}$
\EndIf

\State $\theta \gets \theta - \eta \nabla_\theta \ell(\theta)$ \Comment{Update both classifiers.}
\State $\tilde{r}(s_t, a_t, s_{t+1}) \gets r(s_t, a_t) + \Delta r(s_t, a_t, s_{t+1})$ \Comment{$\Delta r$ is computed with Eq.~\ref{eq:delta-r}.}
\State $\pi \gets \Call{MaxEnt RL}{\pi, \gD_\text{source}, \tilde{r}}$
\EndFor
\State \textbf{return} $\pi$
\end{algorithmic}
\end{algorithm}

To estimate $\Delta r$, we will use a pair of (learned) binary classifiers, which will infer whether transitions came from the source or target domain. The key idea is that the transition probabilities are related to the classifier probabilities via Bayes' rule:
\begin{equation*}
    p(\text{target} \mid s_t, a_t, s_{t+1}) = \underbrace{p(s_{t+1} \mid s_t, a_t, \text{target})}_{=p_\text{target}(s_{t+1} \mid s_t, a_t)} p(s_t, a_t \mid \text{target}) p(\text{target}) / p(s_t, a_t, s_{t+1}).
\end{equation*}
We estimate the term $p(s_t, a_t \mid \text{target})$ on the RHS via \emph{another} classifier, $p(\text{target} \mid s_t, a_t)$:
\begin{equation*}
    p(s_t, a_t \mid \text{target}) = \frac{p(\text{target} \mid s_t, a_t) p(s_t, a_t)}{p(\text{target})}. \label{eq:sa-classifier-bayes}
\end{equation*}
Substituting these expression into our definition for $\Delta r$ and simplifying, we obtain an estimate for $\Delta r$ that depends solely on the predictions of these two classifiers:
\begin{align}
    \Delta r(s_t, a_t, s_{t+1}) &= {\color{orange}\dotuline{\log p(\text{target} \mid s_t, a_t, s_{t+1})}} - {\color{blue}\dashuline{\log p(\text{target} \mid s_t, a_t)}} \nonumber \\
    & - {\color{orange}\dotuline{\log p(\text{source} \mid s_t, a_t, s_{t+1})}}  + {\color{blue}\dashuline{\log p(\text{source} \mid s_t, a_t)}} \label{eq:delta-r}
\end{align}
The {\color{orange}\dotuline{orange}} terms are the difference in logits from the classifier conditioned on $s_t, a_t, s_{t+1}$, while the {\color{blue}\dashuline{blue}} terms are the difference in logits from the classifier conditioned on just $s_t, a_t$. Intuitively, $\Delta r$ answers the following question: for the task of predicting whether a transition came from the source or target domain, how much better can you perform after observing $s_{t+1}$?
We make this connection precise in Appendix~\ref{appendix:mi} by relating $\Delta r$ to mutual information. Ablation experiments (Fig.~\ref{fig:ablation}) confirm that both classifiers are important to the success of our method. The use of transition classifiers makes our method look somewhat similar to adversarial imitation learning~\citep{ho2016generative, fu2017learning}. While our method is \emph{not} solving an imitation learning problem (we do not assume access to any expert experience), our method can be interpreted as learning a \emph{policy} such that the dynamics observed by that policy in the source domain imitate the dynamics of the target~domain.

\paragraph{Algorithm Summary}
Our algorithm modifies an existing MaxEnt RL algorithm to additionally learn two classifiers, $q_{\theta_{\text{SAS}}}(\text{target} \mid s_t, a_t, s_{t+1})$ and $q_{\theta_{\text{SA}}}(\text{target} \mid s_t, a_t)$, parametrized by $\theta_{\text{SAS}}$ and $\theta_{\text{SA}}$ respectively, to minimize the standard cross-entropy loss.
\begin{align*}
    \ell_{\text{SAS}}(\theta_\text{SAS}) & \triangleq -\E_{\gD_\text{target}} \left[ \log q_{\theta_\text{SAS}}(\text{target} \mid s_t, a_t, s_{t+1}) \right] - \E_{\gD_\text{source}} \left[\log q_{\theta_\text{SA}}(\text{source} \mid s_t, a_t, s_{t+1}) \right] \\
    \ell_{\text{SA}}(\theta_\text{SA}) & \triangleq - \E_{\gD_\text{target}} \left[ \log q_{\theta_\text{SA}}(\text{target} \mid s_t, a_t) \right] - \E_{\gD_\text{source}} \left[\log q_{\theta_\text{SA}}(\text{source} \mid s_t, a_t) \right].
\end{align*}

Our algorithm, Domain Adaptation with Rewards from Classifiers (DARC), is presented in Alg.~\ref{alg:odrl} and illustrated in Fig.~\ref{fig:alg-figure}.
To simplify notation, we define $\theta \triangleq (\theta_{\text{SAS}}, \theta_{\text{SA}})$ and $\ell(\theta) \triangleq \ell_{\text{SAS}}(\theta_{\text{SAS}}) + \ell_{\text{SA}}(\theta_{\text{SA}})$.
At each iteration, we collect transitions from the source and (less frequently) target domain, storing the transitions in separate replay buffers. We then sample a batch of experience from both
buffers to update the classifiers. We use the classifiers to modify the rewards from the \emph{source} domain, and apply MaxEnt RL to this experience. We use SAC~\citep{haarnoja2018soft} as our MaxEnt RL algorithm, but emphasize that DARC is applicable to any MaxEnt RL algorithm (e.g., on-policy, off-policy, and model-based). When training the classifiers, we 
add Gaussian input noise to prevent overfitting to the small number of target-domain transitions (see Fig.~\ref{fig:ablation} for an ablation).
Code has been released: {\scriptsize \url{https://github.com/google-research/google-research/tree/master/darc}}

\vspace{-0.5em}
\section{Experiments}
\vspace{-0.5em}

We start with a didactic experiment to build intuition for the mechanics of our method, and then evaluate on more complex tasks. Our experiments will show that DARC outperforms alternative approaches, such as directly applying RL to the target domain or learning importance weights. We will also show that our method can account for domain shift in the termination condition, and confirm the importance of learning two classifiers.

\begin{wrapfigure}[8]{r}{0.5\textwidth}
  \vspace{-1em}
  \includegraphics[width=0.5\textwidth]{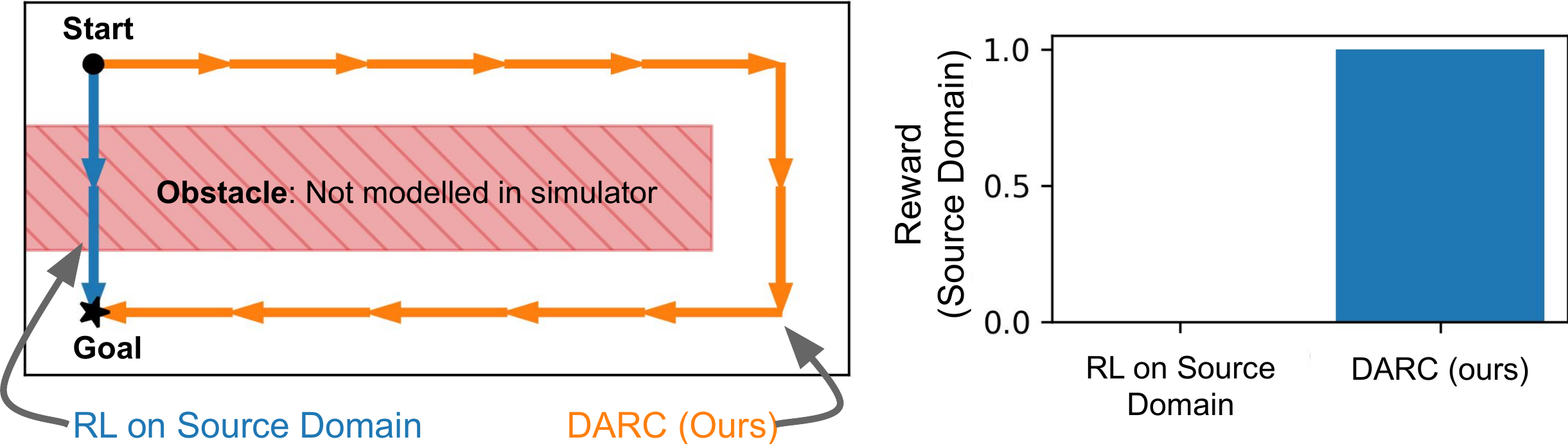}
   \vspace{-1.5em}
   \caption{\textbf{Tabular example of off-dynamics RL}}
\end{wrapfigure}
\paragraph{Illustrative example.}
We start with a simple gridworld example, shown on the right, where we can apply our method without function approximation. The goal is to navigate from the top left to the bottom left. The real environment contains an obstacle (shown in red), which is not present in the source domain. If we simply apply RL on the source domain, we obtain a policy that navigates directly to the goal (blue arrows), and will fail when used in the target domain. We then apply our method: we collect trajectories from the source domain and real world to fit the two tabular classifiers. These classifiers give us a modified reward, which we use to learn a policy in the source domain. The modified reward causes our learned policy to navigate around the obstacle, which succeeds in the target environment.

\begin{wrapfigure}[9]{r}{0.5\textwidth}
  \vspace{-1.5em}
  \includegraphics[width=\linewidth]{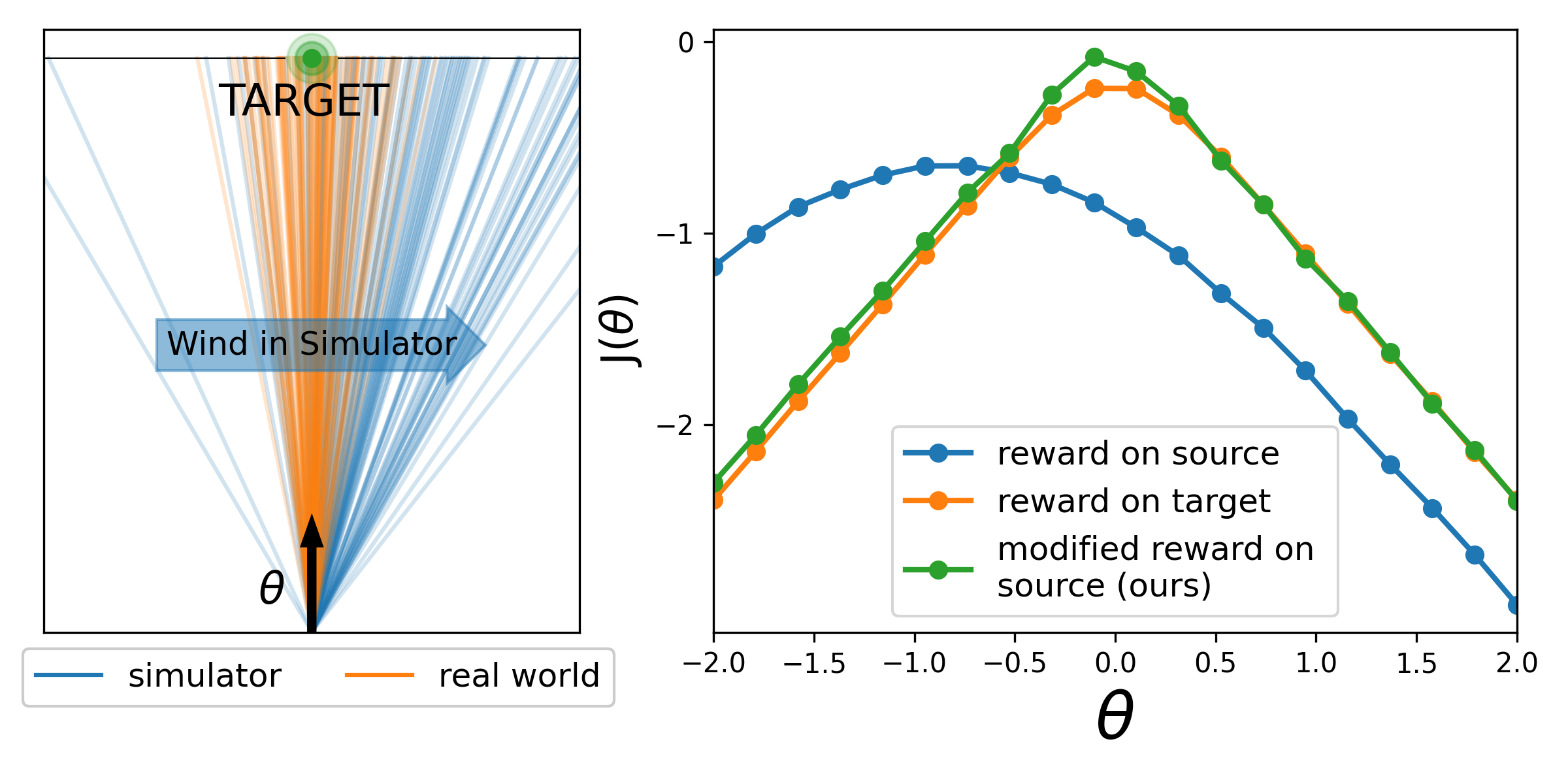}
  \vspace{-2.5em}
   \caption{\textbf{Visualizing the modified reward} \label{fig:arrow}}
\end{wrapfigure}
\paragraph{Visualizing the reward modification in stochastic domains.} In our next experiment, we use an ``archery'' task to visualize how the modified reward accounts for differences in dynamics. The task, shown in Fig.~\ref{fig:arrow}, requires choosing an angle at which to shoot an arrow. The practice range (i.e., the source domain) is outdoors, with wind that usually blows from left to right. The competition range (i.e., the target domain) is indoors with no wind. The reward is the negative distance to the target. We plot the reward as a function of the angle in both domains in Fig.~\ref{fig:arrow}. The optimal strategy for the outdoor range is to compensate for the wind by shooting slightly to the left ($\theta=-0.8$), while the optimal strategy for the indoor range is to shoot straight ahead ($\theta = 0$). We estimate the modified reward function with DARC, and plot the modified reward in the windy outdoor range and indoor range. We aggregate across episodes using $J(\theta) = \log \E_{p(s' \mid \theta)}[\exp(r(s'))]$; see Appendix~\ref{appendix:archery} for details. We observe that maximizing the modified reward in the windy range does not yield high reward in the windy range, but does yield a policy that performs well in the indoor range.

\begin{wrapfigure}[7]{r}{0.5\textwidth}
\vspace{-0.8em}
    \centering
    \begin{subfigure}[t]{0.23\linewidth}
        \centering
        \includegraphics[width=\linewidth]{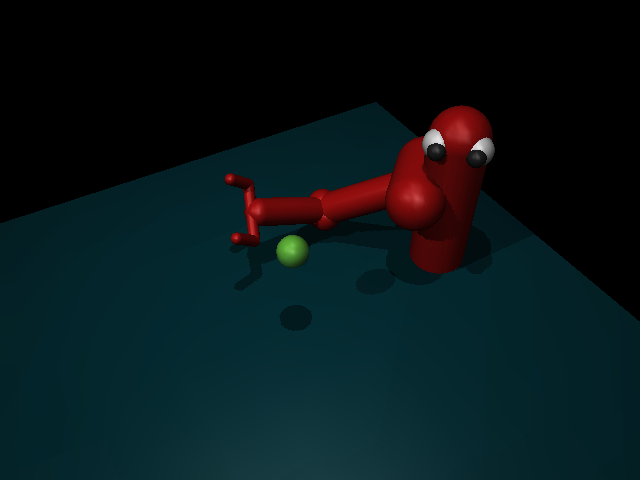}
    \end{subfigure}%
    ~ 
    \begin{subfigure}[t]{0.23\linewidth}
        \centering
        \includegraphics[width=\linewidth]{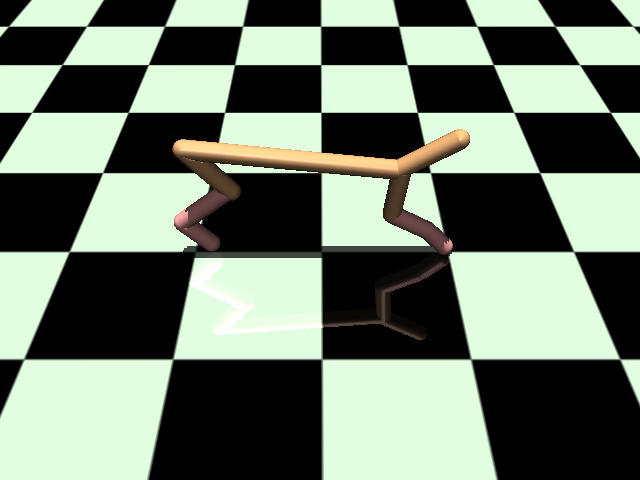}
    \end{subfigure}%
    ~ 
    \begin{subfigure}[t]{0.23\linewidth}
        \centering
        \includegraphics[width=\linewidth]{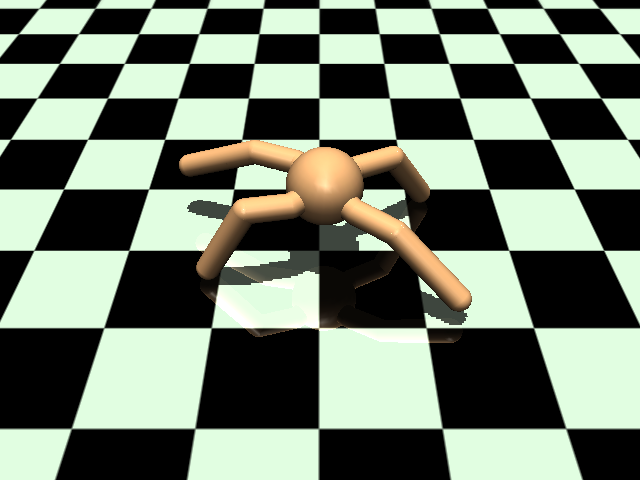}
    \end{subfigure}%
    ~
    \begin{subfigure}[t]{0.23\linewidth}
        \centering
        \includegraphics[width=\linewidth]{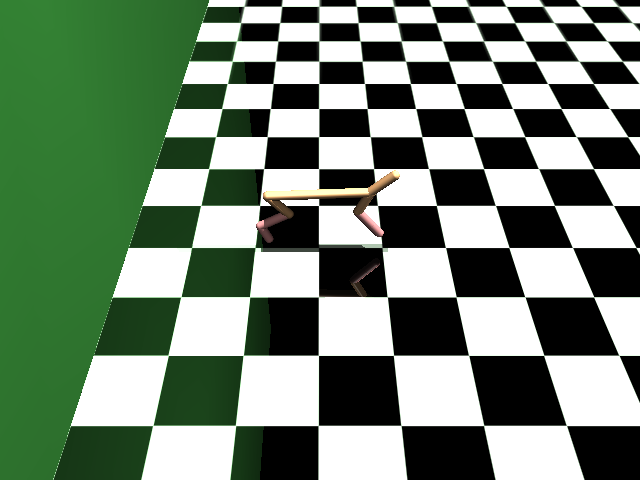}
    \end{subfigure}
    \caption{{\footnotesize \textbf{Environments}: broken reacher, broken half cheetah, broken ant, and half cheetah obstacle.}}
    \label{fig:environments}
\vspace{-0.15in}
\end{wrapfigure}
\paragraph{Scaling to more complex tasks.}
We now apply DARC to the more complex tasks shown in Fig.~\ref{fig:environments}. We define three tasks by crippling one of the joints of each robot in the target domain, but using the fully-functional robot in the source domain. We use three simulated robots taken from
OpenAI Gym~\citep{brockman2016openai}: 7 DOF reacher, half cheetah, and ant. The broken reacher is based on the task described by \citet{vemula2020planning}. We also include a task where the shift in dynamics is external to the robot, by modifying the cheetah task to reward the agent for running both forward and backwards. It is easier to learn to run backwards, an obstacle in the target domain prevents the agent from running~backwards. This ``half cheetah obstacle'' task does not entirely satisfy the assumption in Eq.~\ref{eq:completeness} because transitions such as bouncing off the obstacle are only possible in the target domain, not the source domain. Nonetheless, the success of our method on this task illustrates that DARC can excel even in settings that do not satisfy the assumption in Eq.~\ref{eq:completeness}.

We compare our method to eight baselines. \textbf{RL on Source} and \textbf{RL on Target} directly perform RL on the source and target domains, respectively. The \textbf{Finetuning} baseline takes the result of running RL on the source domain, and further finetunes the agent on the target domain. The \textbf{Importance Weighting} baseline performs RL on importance-weighted samples from the source domain; the importance weights are $\exp(\Delta r)$. Recall that DARC collects many ($r = 10$) transitions in the source domain and performs many gradient updates for each single transition collected in the target domain (Alg.~\ref{alg:odrl} Line~\ref{line:mod-r}). We therefore compared against a \textbf{RL on Target (10x)} baseline that likewise performs many ($r = 10$) gradient updates per transition in the target domain. Next, we compared against two recent model-based RL methods: \textbf{MBPO}~\citep{janner2019trust} and \textbf{PETS}~\citep{chua2018deep}. Finally, we also compared against \textbf{MATL}~\citep{wulfmeier2017mutual}, which is similar in spirit to our method but uses a different modified reward.

\begin{figure}[t]
    \vspace{-1em}
    \centering
     \includegraphics[width=\linewidth]{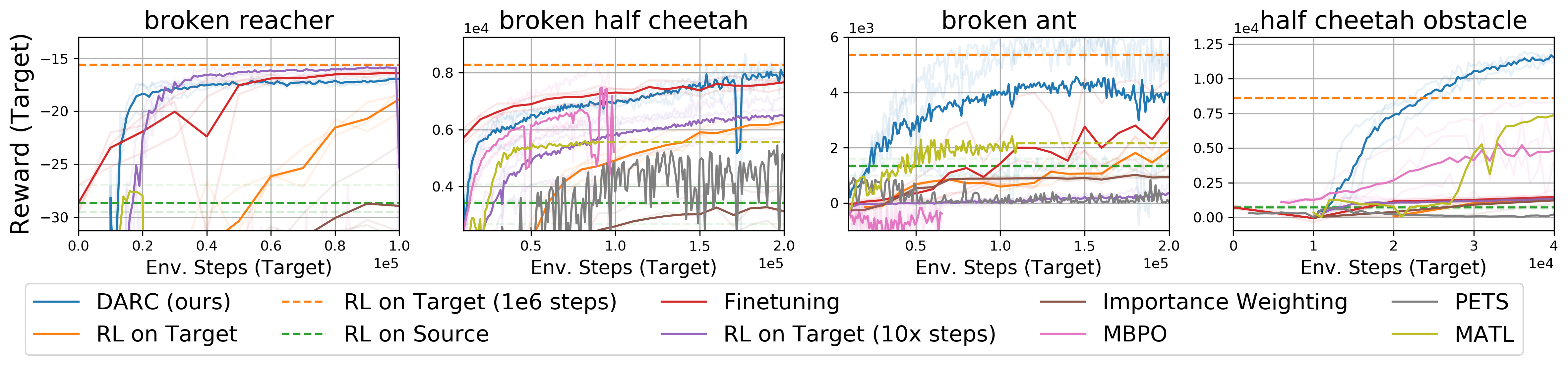}
    \vspace{-1.5em}
    \caption{{\footnotesize \textbf{DARC compensates for crippled robots and obstacles}: We apply DARC to four continuous control tasks: three tasks (broken reacher, half cheetah, and ant) which are crippled in the target domain but not the source domain, and one task (half cheetah obstacle) where the source domain omits the obstacle from the target domain.  Note that na\"ively ignoring the shift in dynamics (green dashed line) performs quite poorly, while directly learning on the crippled robot requires an order of magnitude more experience than our method.}}
    \label{fig:broken-robots}
    \vspace{-1em}
\end{figure}

We show the results of this experiment in Fig.~\ref{fig:broken-robots}, plotting the reward on the \emph{target} domain as a function of the number of transitions in the \emph{target} domain. In this figure, the transparent lines correspond to different random seeds, and the darker lines are the average of these random seeds. On all tasks, the RL on source baseline (shown as a dashed line because it observes no target transitions) performs considerably worse than the optimal policy from RL on the target domain, suggesting that good policies for the source domain are suboptimal for the target domain. Nonetheless, on three of the four tasks our method matches (or even surpasses) the asymptotic performance of doing RL on the target domain, despite never doing RL on experience from the target domain, and despite observing 5 -- 10$\times$ less experience from the target domain. On the broken reacher and broken half cheetah tasks, finetuning on the target domain performs on par with our method.
On the simpler broken reacher task, just doing RL on the target domain with a large number of gradient steps works quite well (we did not tune this parameter for our method). 
While the model-based baselines (PETS and MBPO) also performed well on for low-dimensional tasks (broken reacher, broken half cheetah), they perform quite poorly on more challenging tasks like broken ant, supporting our intuition that classification is easier than learning a dynamics model in high dimensional tasks.
Finally, DARC outperforms MATL on all tasks. %

\begin{figure}[t]
    \centering
    \vspace{-1.7em}
    \begin{subfigure}[t]{0.49\linewidth}
        \centering
        \includegraphics[width=\linewidth]{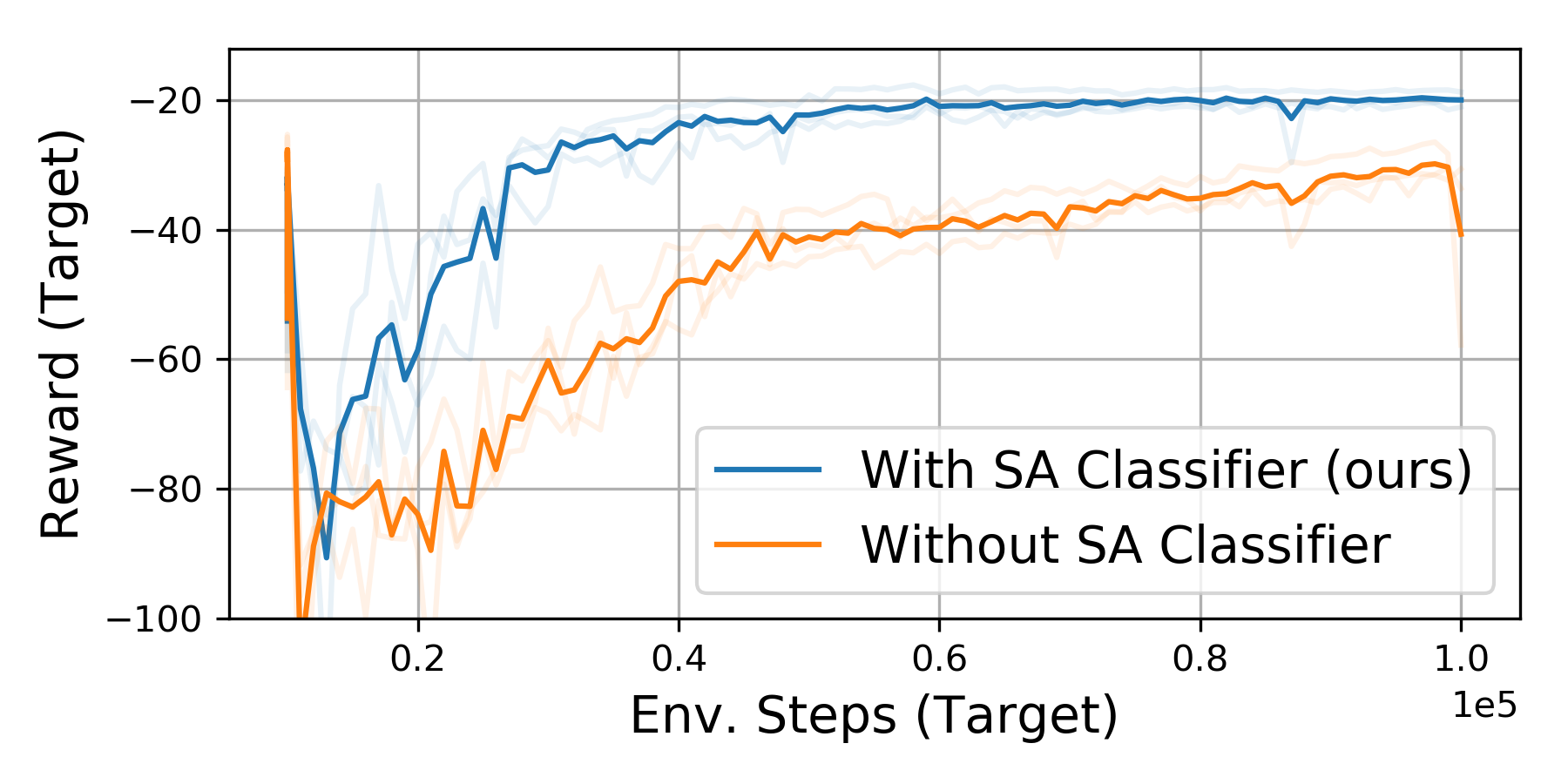}
    \end{subfigure}%
    ~ 
    \begin{subfigure}[t]{0.49\linewidth}
        \centering
        \includegraphics[width=\linewidth]{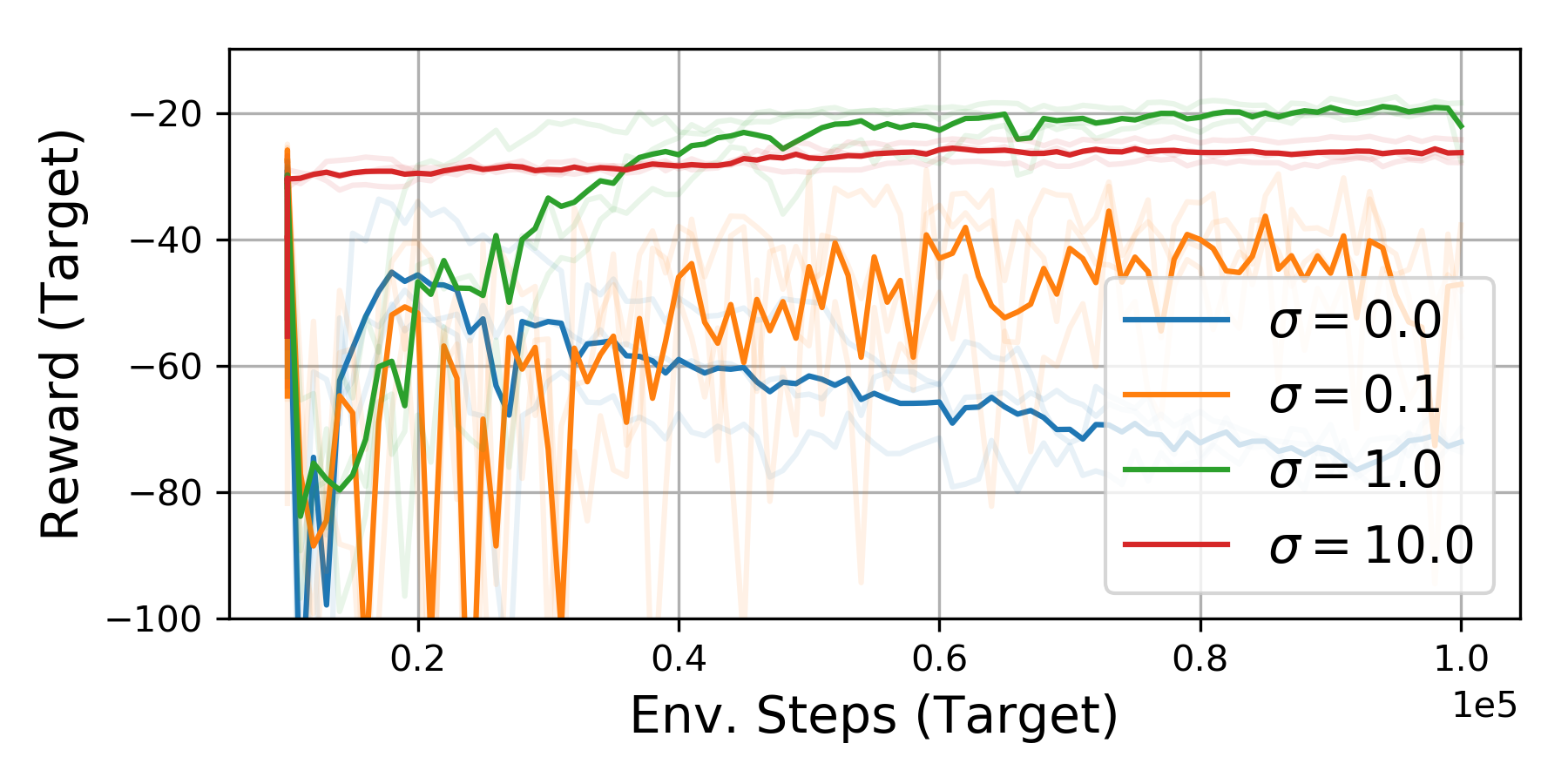}
    \end{subfigure}
    \vspace{-1em}
    \caption{{\footnotesize \textbf{Ablation experiments} \figleft\; DARC performs worse when only one classifier is used. \figright \; Using input noise to regularize the classifiers boosts performance. Both plots show results for broken reacher; see Appendix~\ref{appendix:ablation} for results on all environments.}}
    \label{fig:ablation}
    \vspace{-1.5em}
\end{figure}

\paragraph{Ablation Experiments.}
Our next experiment examines the importance of using two classifiers to estimate $\Delta r$.
We compared our method to an ablation that does not learn the SA classifier, effectively ignoring the {\color{blue} \dashuline{blue}} terms in Eq.~\ref{eq:delta-r}. As shown in Fig.~\ref{fig:ablation} (left), this ablation performs considerably worse than our method. Intuitively, this makes sense: we might predict that a transition came from the source domain not because the next state had higher likelihood under the source dynamics, but rather because the state or action was visited more frequently in the source domain. The second classifier used in our method corrects for this distribution shift.

Next, we examine the importance of input noise regularization in classifiers. As we observe only a handful of transitions from the target domain, we hypothesized that regularization would be important to prevent overfitting. We test this hypothesis in Fig.~\ref{fig:ablation} (right) by training our method on the broken reacher environment with varying amounts of input noise. With no noise or little noise our method performs poorly (likely due to overfitting); too much noise also performs poorly (likely due to underfitting). We used a value of 1 in all our experiments, and did not tune this value. See Appendix~\ref{appendix:ablation} for more plots of both ablation experiments.

\begin{wrapfigure}[17]{r}{0.5\textwidth}
    \centering
    \vspace{-1.em}
    \includegraphics[width=\linewidth]{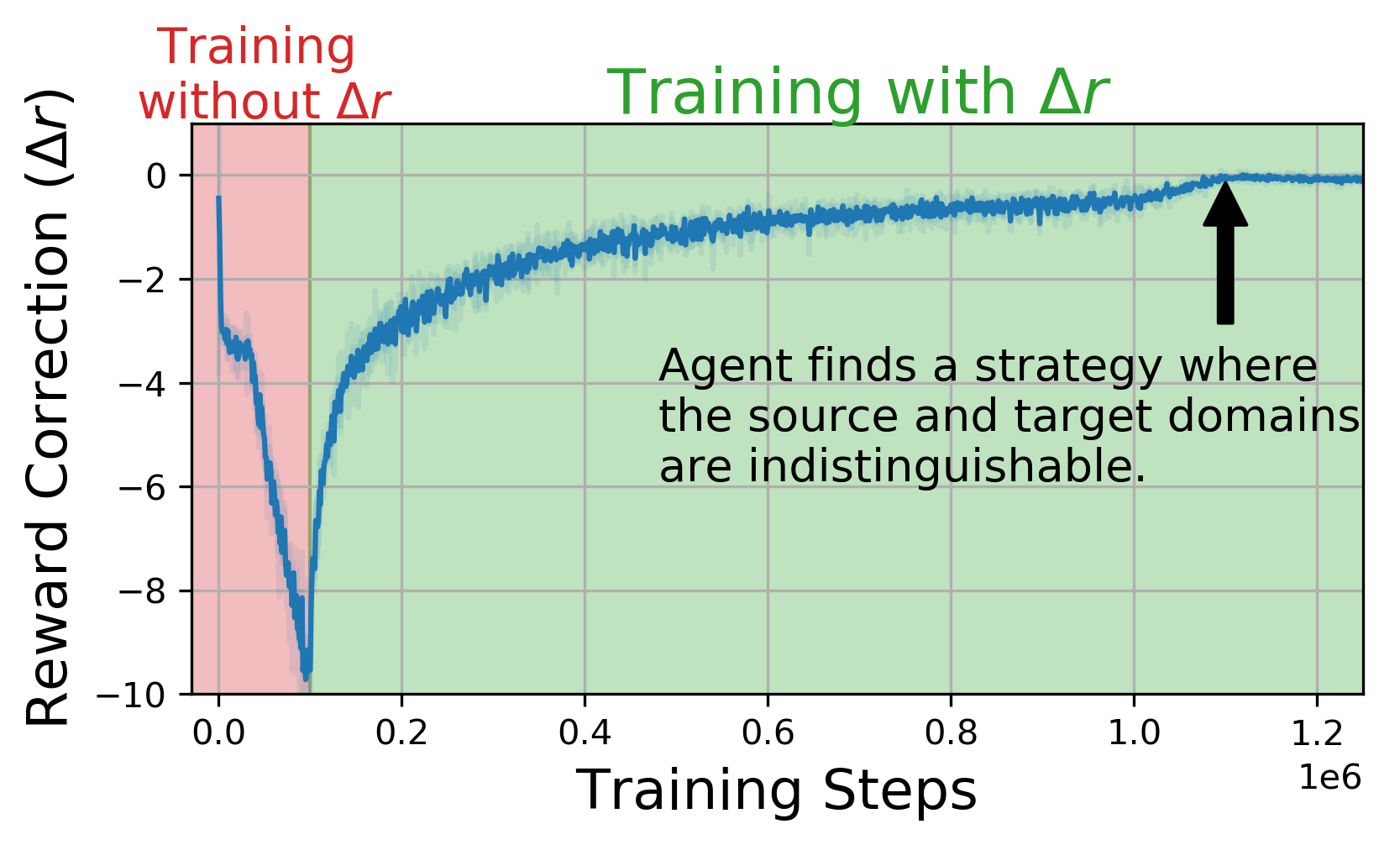}
    \vspace{-2em}
    \caption{{\footnotesize Without the reward correction, the agent takes transitions where the source domain and target domains are dissimilar; after adding the reward correction, the agent's transitions in the source domain are increasingly likely under the target domain. %
    }}
    \label{fig:delta-r}
\end{wrapfigure}
To gain more intuition for our method, we recorded the reward correction $\Delta r$ throughout training on the broken reacher environment. In this experiment, we ran RL on the source domain for 100k steps before switching to our method. Said another way, we ignored $\Delta r$ for the first 100k steps of training. As shown in Fig.~\ref{fig:delta-r}, $\Delta r$ steadily decreases during these first 100k steps, suggesting that the agent is learning a strategy that takes transitions where the source domain and target domain have different dynamics: the agent is making use of its broken joint. After 100k steps, when we maximize the combination of task reward and  $\Delta r$, we observe that $\Delta r$ increases, so the agent's transitions in the source domain are increasingly consistent with target domain dynamics. After around 1e6 training steps $\Delta r$ is zero: the agent has learned a strategy that uses transitions that are indistinguishable between the source and target domains.

\paragraph{Safety emerges from domain adaptation to the termination condition.}

In many safety-critical applications, the real-world and simulator have different safeguards, which kick in to stop the agent and terminate the episode. For an agent to effectively transfer from the simulator to the real world, it cannot rely on safeguards which are present in one domain but not the other. Since this termination condition is part of the dynamics~\citep{white2017unifying}, we can readily apply DARC to this setting. %

\begin{wrapfigure}[12]{r}{0.5\textwidth}
    \centering
    \vspace{-0.5em}
    \begin{subfigure}[c]{0.2\linewidth}
        \centering
        \includegraphics[width=\linewidth]{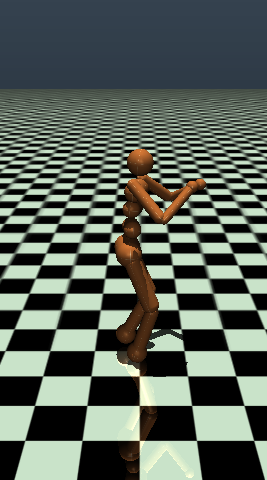}
    \end{subfigure}%
    ~ 
    \begin{subfigure}[c]{0.8\linewidth}
        \centering
        \includegraphics[width=\linewidth]{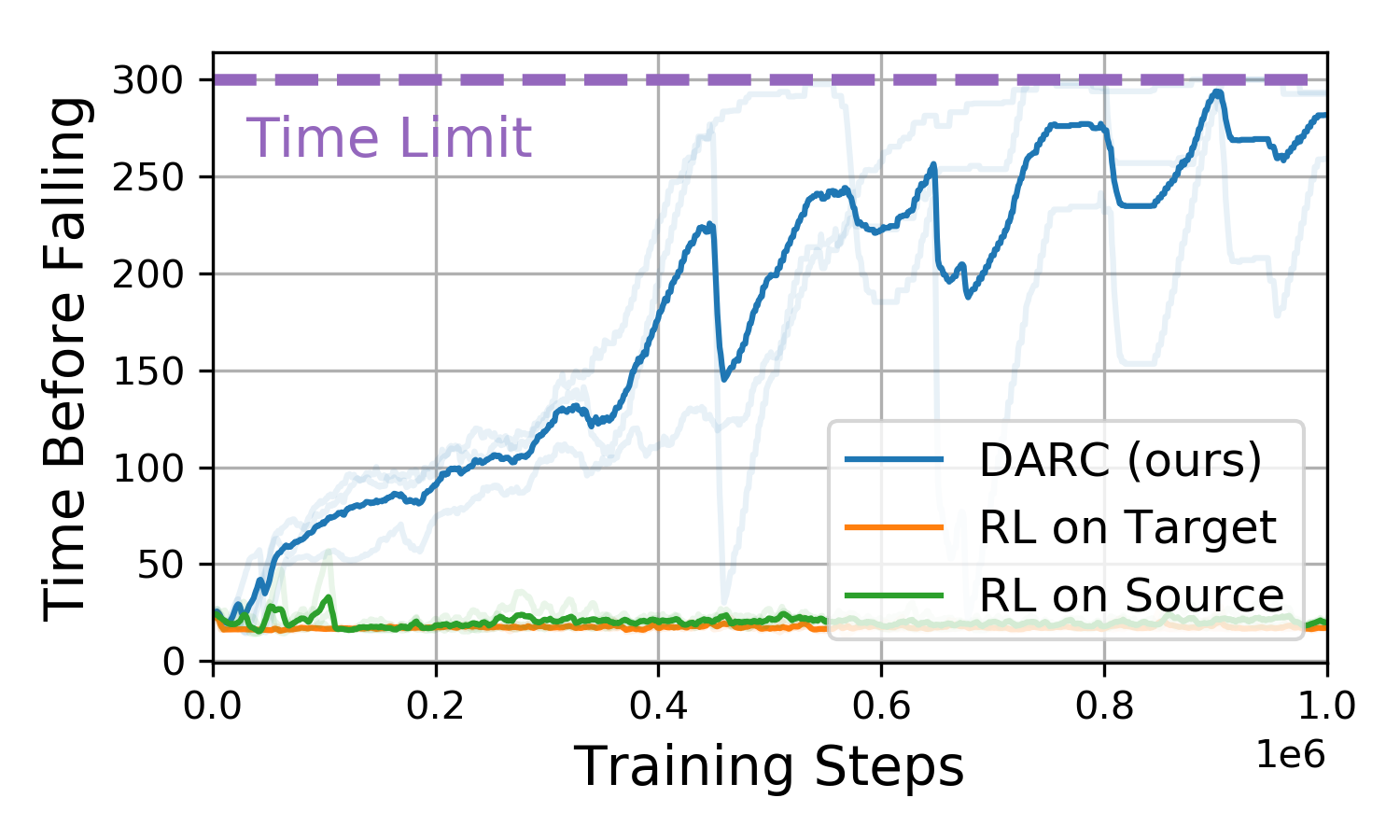}
    \end{subfigure}%
    \vspace{-1em}
    \caption{{\footnotesize Our method accounts for domain shift in the termination condition, causing the agent to avoid transitions that cause termination in the target domain.}}
    \label{fig:falling}
\end{wrapfigure}
We use the humanoid shown in Fig.~\ref{fig:falling} for this experiment and set the task reward to 0. In the source domain episodes have a fixed length of 300 steps; in the target domain the episode terminates when the robot falls. The scenario mimics the real-world setting where robots have freedom to practice in a safe, cushioned, practice facility, but are preemptively stopped when they try to take unsafe actions in the real world. Our aim is for the agent to learn to avoid unsafe transitions in the source domain that would result in episode termination in the target domain. %
As shown in Fig.~\ref{fig:falling}, our method learns to remain standing for nearly the entire episode. As expected, baselines that maximize the zero reward on the source and target domains fall immediately. While DARC was not designed as a method for safe RL~\citep{tamar2013scaling, achiam2017constrained, eysenbach2017leave, berkenkamp2017safe}, this experiment suggests that safety may emerge automatically from DARC, without any manual reward function design.

\vspace{-0.5em}
\paragraph{Comparison with Prior Transfer Learning Methods.}

\begin{wrapfigure}[12]{r}{0.5\textwidth}
\vspace{-1.5em}
\includegraphics[height=3.9cm]{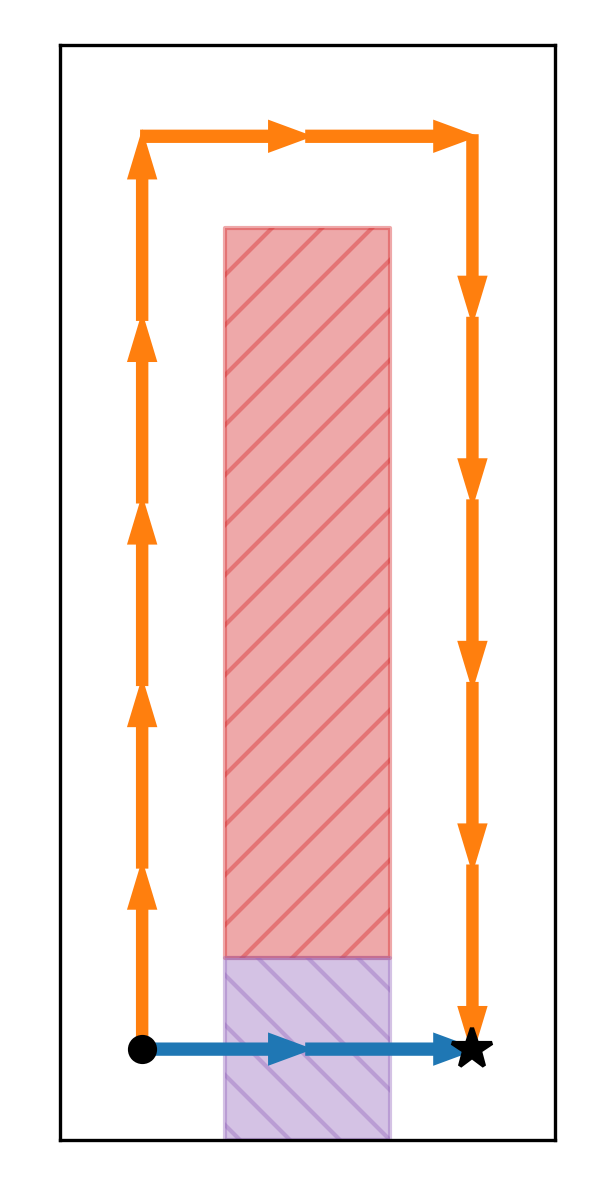}
\includegraphics[height=3.9cm]{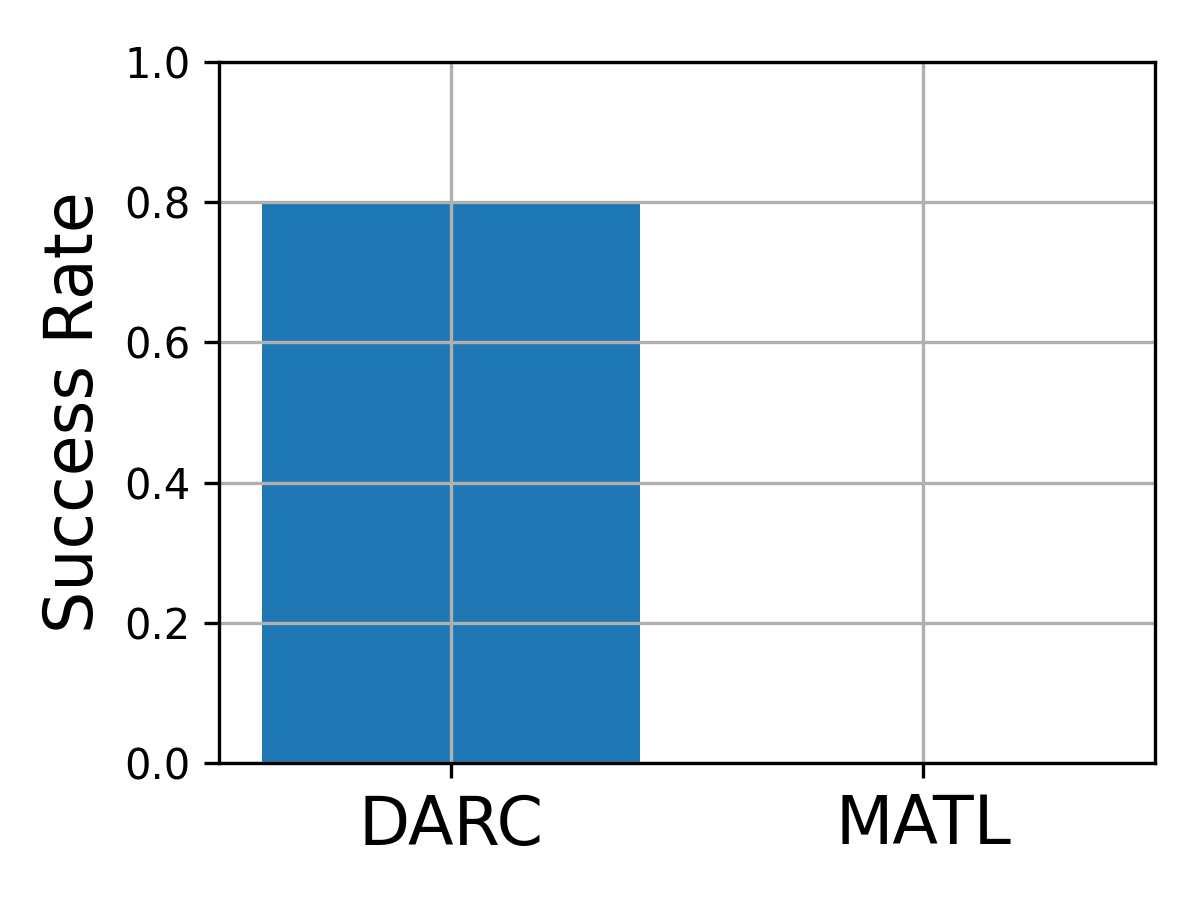}
\vspace{-1.7em}
\caption{Comparison with MATL} \label{fig:matl}
\end{wrapfigure}
We are not the first work that modifies the reward function to perform transfer in RL~\citep{koos2012transferability}, nor the first work to \emph{learn} how to modify the reward function~\citep{wulfmeier2017addressing}. However, these prior works lack theoretical justification. In contrast, our approach maximizes a well-defined variational objective and our analysis guarantees that agents learned with our method will achieve similar rewards in the source and target domains. Our formal guarantees (Sec.~\ref{sec:variational}) do not apply to MATL~\citep{wulfmeier2017mutual} because their classifier is not conditioned on the action.
Indeed, our results on the four tasks in Fig.~\ref{fig:broken-robots} indicate that DARC ourperforms MATL on all tasks. To highlight this difference, we compared DARC and MATL on a gridworld (right), where the source and target domains differed by assigning opposite effects to the ``up'' and ``down'' in the purple state in the source and target domains. We collected data from a uniform random policy, so the \emph{marginal} distribution $p(s_{t+1} \mid s_t)$ was the same in the source and target domains, even though the dynamics $p(s_{t+1} \mid s_t, a_t)$ where different. In this domain, MATL fails to recognize that the source and target domains are different. DARC succeeds in this task for 80\% of trials while MATL succeeds for 0\% of trials. 
We conclude that the conditioning on the action, as suggested by our analysis, is especially important when using experience collected from stochastic policies.

\vspace{-0.5em}
\section{Discussion}
\vspace{-0.5em}

In this paper, we proposed a simple, practical, and intuitive approach for domain adaptation to changing dynamics in RL. We motivate this method from a novel variational perspective on domain adaptation in RL, which suggests that we can compensate for differences in dynamics via the reward function. Moreover, we formally prove that, subject to a lightweight assumption, our method is guaranteed to yield a near-optimal policy for the target domain. Experiments on a range of control tasks show that our method can leverage the source domain to learn policies that will work well in the target domain, despite observing only a handful of transitions from the target domain.

\paragraph{Limitations} The main limitation of our method is that the source dynamics must be sufficiently stochastic, an assumption that can usually be satisfied by adding noise to the dynamics, or ensembling a collection of sources. Empirically, we found that our method worked best on tasks that could be completed in many ways in the source domain, but some of these strategies were not compatible with the target dynamics. The main takeaway of this work is that inaccuracies in dynamics can be compensated for via the reward function. In future work we aim to use the variation perspective on domain adaptation (Sec.~\ref{sec:variational}) to learn the dynamics for the source~domain.

\vspace{2em}
{\footnotesize
\paragraph{Acknowledgements.}
We thank Anirudh Vemula for early discussions; we thank Karol Hausman, Vincent Vanhoucke and anonymous reviews for feedback on drafts of this work. We thank Barry Moore for providing containers with MuJoCo and Dr. Paul Munro granting access to compute at CRC.
This work is supported by the Fannie and John Hertz Foundation, University of Pittsburgh Center for Research Computing (CRC), NSF (DGE1745016, IIS1763562), ONR (N000141812861), and US Army. Any opinions, findings and conclusions or recommendations expressed in this material are those of the author(s) and do not necessarily reflect the views of the National Science Foundation.

\paragraph{Contributions.}
BE proposed the idea of using rewards to correct for dynamics, designed and ran many of the experiments in the paper, and wrote much of the paper. SA did the initial literature review, wrote and designed some of the DARC experiments and environments, developed visualizations of the modified reward function, and ran the MBPO experiments. SC designed some of the initial environments, helped with the implementation of DARC, and ran the PETS experiments. RS and SL provided guidance throughout the project, and contributed to the structure and writing of the paper.
}

{\footnotesize

}

\clearpage
\appendix

\section{Additional Interpretations of the Reward Correction}
\label{appendix:perspectives}
This section presents four additional interpretations of the reward correction, $\Delta r$.

\subsection{Coding Theory}
\label{appendix:coding}
The reward correction $\Delta r$ can also be understood from the perspective of coding theory. Suppose that we use a data-efficient replay buffer that exploits that fact that the next state $s_{t+1}$ is highly redundant with the current state and action, $s_t, a_t$. If we assume that the replay buffer compression has been optimized to store transitions from the target environment,  (negative) $\Delta r$ is the number of additional bits (per transition) needed for our source replay buffer, as compared with our target replay buffer. Thus, an agent which maximizes $\Delta r$ will seek those transitions that can be encoded most efficiently, minimizing the size of the source replay buffer.

\subsection{Mutual Information}
\label{appendix:mi}
We can gain more intuition in the modified reward by writing the expected value of $\Delta r$ from Eq.~\ref{eq:delta-r} in terms of mutual information:
\begin{equation*}
    \E[\Delta r(s_t, a_t, s_{t+1})] = I(s_{t+1}; \text{target} \mid s_t, a_t) - I(s_{t+1}; \text{source} \mid s_t, a_t).
\end{equation*}
The mutual information $I(s_{t+1} ; \text{target} \mid s_t, a_t)$ reflects how much better you can predict the next state if you know that you are interacting with the target domain, instead of the source domain. Our approach does exactly this, rewarding the agent for taking transitions that provide information about the target domain while penalizing transitions that hint to the agent that it is interacting with a source domain rather than the target domain: we don't want our are agent to find bugs in the Matrix.%

\subsection{Lower bound on the risk-sensitive reward objective.}
\label{appendix:risk-sensitive}
While we derived DARC by minimizing a reverse KL divergence (Eq.~\ref{eq:kl}), we can also show that DARC maximizes a lower bound on a risk-sensitive reward objective~\citep{mihatsch2002risk}:
{\footnotesize
\begin{align}
    \log &\E_{\substack{s' \sim p_\text{target}(s' \mid s, a),\\a \sim \pi(a \mid s)}} \left[ \exp \left( \sum_t r(s_t, a_t)\right) \right] \nonumber \\
    &= \log \E_{\substack{s' \sim p_\text{source}(s' \mid s, a),\\a \sim \pi(a \mid s)}} \left[ \left( \prod_t\frac{p_\text{target}(s_{t+1} \mid s_t, a_t)}{p_\text{source}(s_{t+1} \mid s_t, a_t)} \right) \exp \left( \sum_t r(s_t, a_t)\right) \right] \nonumber \\
    &= \log \E_{\substack{s' \sim p_\text{source}(s' \mid s, a),\\a \sim \pi(a \mid s)}} \left[ \exp \left( \sum_t r(s_t, a_t) + \underbrace{\log p_\text{target}(s_{t+1} \mid s_t, a_t) - \log p_\text{source}(s_{t+1} \mid s_t, a_t)}_{\Delta r(s_t, a_t, s_{t+1})} \right) \right] \label{eq:before-jensen} \\
    & \ge  \E_{\substack{s' \sim p_\text{source}(s' \mid s, a),\\a \sim \pi(a \mid s)}} \left[\sum_t r(s_t, a_t) + \Delta r(s_t, a_t, s_{t+1}) \right]. \label{eq:after-jensen}
\end{align}
}
The inequality on the last line is an application of Jensen's inequality. One interesting question is when it would be preferable to maximize Eq.~\ref{eq:before-jensen} rather than Eq.~\ref{eq:after-jensen}. While Eq.~\ref{eq:after-jensen} provides a loser bound on the risk sensitive objective, empirically it may avoid the risk-seeking behavior that can be induced by risk-sensitive objectives. We leave the investigation of this trade-off as future work.

\subsection{A Constraint on Dynamics Discrepancy}
\label{appendix:dynamics-constraint}
Our method regularizes the policy to visit states where the transition dynamics are similar between the source domain and target domain:
\begin{equation*}
    \max_\pi \E_{\substack{a \sim \pi(a \mid s) \\ s' \sim p(s' \mid s, a)}} \left[\sum_t r(s_t, a_t) + \underbrace{\log p_\text{target}(s_{t+1} \mid s_t, a_t) - \log p_\text{source}(s_{t+1} \mid s_t, a_t)}_{-\kl{p_\text{source}}{p_\text{target}}} + \gH_\pi[a_t \mid s_t] \right].
\end{equation*}
This objective can equivalently be expressed as applying MaxEnt RL to only those policies which avoid exploiting the dynamics discrepancy. More precisely, the KKT conditions guarantee that there exists a positive constant $\epsilon > 0$ such that our objective is equivalent to the following constrained objective:
\begin{equation*}
    \max_{\pi \in \Pi_\text{DARC}} \E_{\substack{a \sim \pi(a \mid s) \\ s' \sim p(s' \mid s, a)}} \left[\sum_t r(s_t, a_t) + \gH_\pi[a_t \mid s_t] \right],
\end{equation*}
where $\Pi_\text{DARC}$ denotes the set of policies that do not exploit the dynamics discrepancy:
\begin{equation}
    \Pi_\text{DARC} \triangleq \left\{ \pi \Big \vert \E_{\substack{a \sim \pi(a \mid s) \\ s' \sim p(s' \mid s, a)}} \left[\sum_t \kl{p_\text{source}(s_{t+1} \mid s_t, a_t)}{p_\text{target}(s_{t+1} \mid s_t, a_t)} \right] \le \epsilon \right\}. \label{eq:constrained}
\end{equation}
One potential benefit of considering our method as the unconstrained objective is that it provides a principled method for increasing or decreasing the weight on the $\Delta r$ term, depending on how much the policy is currently exploiting the dynamics discrepancy. We leave this investigation as future~work.

\section{Proofs of Theoretical Guarantees}
\label{appendix:proofs}

In this section we present our analysis showing that maximizing the modified reward $r + \Delta r$ in the source domain yields a near-optimal policy for the target domain, subject to Assumption~\ref{assumption:opt}. To start, we show that maximizing the modified reward in the source domain is equivalent to maximizing the unmodified reward, subject to the constraint that the policy not exploit the dynamics:
\begin{lemma}
Let a reward function $r(s, a)$, source dynamics $p_\text{source}(s' \mid s, a)$, and target dynamics $p_\text{target}(s' \mid s, a)$ be given. Then there exists $\epsilon > 0$ such the optimization problem in Eq.~\ref{eq:kl} is equivalent to
\begin{equation*}
    \max_{\pi \in \Pi_{\text{no exploit}}} \E_{p_\text{source}, \pi} \left[\sum r(s_t, a_t) +  \gH_\pi[a_t \mid s_t] \right],
\end{equation*}
where $\Pi_\text{no exploit}$ denotes the set of policies that do not exploit the dynamics:
\begin{equation*}
    \Pi_\text{no exploit} \triangleq \left\{ \E_{\substack{a \sim \pi(a \mid s) \\ s' \sim p(s' \mid s, a)}} \left[\sum_t \kl{p_\text{source}(s_{t+1} \mid s_t, a_t)}{p_\text{target}(s_{t+1} \mid s_t, a_t)} \right] \le \epsilon \right \} .
\end{equation*}
\end{lemma}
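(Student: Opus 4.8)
The plan is to read Eq.~\ref{eq:kl} as the Lagrangian of the claimed constrained program with the dual variable fixed to one, and then to choose $\epsilon$ so that complementary slackness holds at the DARC optimum. First I would push the expectation over $s_{t+1}\sim p_\text{source}(\cdot\mid s_t,a_t)$ through $\Delta r$:
\[
\E_{s_{t+1}\sim p_\text{source}(\cdot\mid s_t,a_t)}\big[\Delta r(s_{t+1},s_t,a_t)\big]\;=\;-\kl{p_\text{source}(s_{t+1}\mid s_t,a_t)}{p_\text{target}(s_{t+1}\mid s_t,a_t)}.
\]
Dropping the policy-independent constant $c$, minimizing $\kl{q}{p}$ in Eq.~\ref{eq:kl} is therefore the same as $\max_\pi\, J(\pi)-C(\pi)$, where $J(\pi)\triangleq\E_{p_\text{source},\pi}\big[\sum_t r(s_t,a_t)+\gH_\pi[a_t\mid s_t]\big]$ and $C(\pi)\triangleq\E_{p_\text{source},\pi}\big[\sum_t\kl{p_\text{source}(s_{t+1}\mid s_t,a_t)}{p_\text{target}(s_{t+1}\mid s_t,a_t)}\big]$ is exactly the functional defining $\Pi_\text{no exploit}$.

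Next I would set $\epsilon\triangleq C(\pi_\text{DARC}^*)$, where $\pi_\text{DARC}^*$ maximizes $J-C$. Each conditional KL is nonnegative, so $\epsilon\ge0$, and in the nondegenerate case where the DARC optimum takes at least one transition on which the source and target dynamics disagree, $\epsilon>0$, as required. It then remains to show the two programs share the same maximizers. Feasibility of $\pi_\text{DARC}^*$ for $\max_{\pi:\,C(\pi)\le\epsilon}J(\pi)$ is immediate since $C(\pi_\text{DARC}^*)=\epsilon$. For optimality, if some $\pi'$ with $C(\pi')\le\epsilon$ had $J(\pi')>J(\pi_\text{DARC}^*)$, then $J(\pi')-C(\pi')\ge J(\pi')-\epsilon> J(\pi_\text{DARC}^*)-\epsilon=J(\pi_\text{DARC}^*)-C(\pi_\text{DARC}^*)$, contradicting optimality of $\pi_\text{DARC}^*$ for $J-C$; running the same inequality chain backwards shows that every maximizer of the constrained program also maximizes $J-C$, which gives the equivalence in both directions. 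Equivalently, one verifies that $(\pi_\text{DARC}^*,\lambda=1)$ satisfies the KKT conditions: stationarity because $\pi_\text{DARC}^*$ maximizes the Lagrangian $J-\lambda(C-\epsilon)$ at $\lambda=1$, primal feasibility and complementary slackness because $C(\pi_\text{DARC}^*)=\epsilon$, and dual feasibility because $\lambda=1>0$.

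I expect the main subtlety to be that $J$ and $C$ depend on $\pi$ nonconvexly, through the induced state--action visitation distribution, so I would avoid invoking KKT sufficiency as a black box and instead rely on the elementary exchange argument above, which needs no convexity. Two smaller points each deserve a sentence: the expectation defining $\Delta r$ is well posed because the coverage assumption in Eq.~\ref{eq:completeness} makes $\log p_\text{source}(s_{t+1}\mid s_t,a_t)$ finite on the support of $p_\text{source}$; and $C(\pi)$ may legitimately equal $+\infty$ for policies that exploit transitions impossible under $p_\text{target}$ --- such policies simply lie outside $\Pi_\text{no exploit}$ and receive objective $-\infty$ in Eq.~\ref{eq:kl}, so (as long as some policy has finite $C$) they never affect either maximization. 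The degenerate case $\epsilon=0$, in which the DARC optimum already avoids every dynamics discrepancy, can be stated separately, since then DARC coincides with MaxEnt RL restricted to $\Pi_\text{no exploit}$ with a zero budget.
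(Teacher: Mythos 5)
Your proof is correct, and it is considerably more careful than the paper's, which disposes of this lemma in a single sentence ("a straightforward application of the KKT conditions"). The high-level idea is the same --- read Eq.~\ref{eq:kl} as the Lagrangian of the constrained program with multiplier one, after noting that $\E_{s_{t+1}\sim p_\text{source}}[\Delta r]=-\kl{p_\text{source}}{p_\text{target}}$, and choose $\epsilon$ by complementary slackness --- but your execution differs in a way that matters. KKT conditions are only necessary (under a constraint qualification) in nonconvex problems, and both $J$ and $C$ depend nonconvexly on $\pi$ through the induced visitation distribution, so citing KKT does not by itself deliver the claimed equivalence of the penalized and constrained programs. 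Your elementary exchange argument (set $\epsilon = C(\pi_\text{DARC}^*)$ and derive a contradiction from any feasible $\pi'$ with $J(\pi')>J(\pi_\text{DARC}^*)$) closes exactly this gap without any convexity assumption, and your side remarks on the well-posedness of $\Delta r$ under Eq.~\ref{eq:completeness}, on policies with $C(\pi)=+\infty$, and on the degenerate case $\epsilon=0$ (where the lemma's literal claim of a strictly positive $\epsilon$ fails) address edge cases the paper ignores. One small caveat worth adding: with your choice of $\epsilon$, every maximizer of the constrained program maximizes $J-C$, but a second maximizer of $J-C$ attaining a different constraint value need not be feasible for the constrained program, so the equivalence is most cleanly stated in terms of optimal values and of the particular maximizer used to define $\epsilon$.
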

The proof 
is a straightforward application of the KKT conditions. This lemma says that maximizing the modified reward can be equivalently viewed as restricting the set of policies to those that do not exploit the dynamics. Next, we will show that policies that do not exploit the dynamics have an expected (entropy-regularized) reward that is similar in the source and target domains:
\begin{lemma} \label{lemma:dynamics-rewards}
Let policy $\pi \in \Pi_\text{no exploit}$ be given, and let $R_\text{max}$ be the maximum (entropy-regularized) return of any trajectory. Then the following inequality holds:
\begin{equation*}
 \left| \E_{p_\text{source}} \left[ \sum r(s_t, a_t) + \gH_\pi[a_t \mid s_t] \right] - \E_{p_\text{target}} \left[ \sum r(s_t, a_t) + \gH_\pi[a_t \mid s_t] \right] \right| \le 2 R_\text{max} \sqrt{\epsilon / 2}.
\end{equation*}
\end{lemma}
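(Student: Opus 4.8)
The plan is to read the left-hand side as the gap between the expectations of a \emph{single} bounded functional of a trajectory under two different trajectory distributions, bound that gap by total variation distance, convert total variation into a KL divergence via Pinsker's inequality, and finally observe that the resulting trajectory-level KL is exactly the quantity that the condition $\pi\in\Pi_\text{no exploit}$ controls.

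Concretely, first I would let $P_\text{source}$ and $P_\text{target}$ denote the distributions over trajectories obtained by rolling out the \emph{same} policy $\pi$ under the source and target dynamics respectively, both started from the shared initial-state distribution $p_1$, so that $P_\bullet(\tau) = p_1(s_1)\prod_t \pi(a_t\mid s_t)\,p_\bullet(s_{t+1}\mid s_t,a_t)$. Writing $f(\tau)\triangleq \sum_t r(s_t,a_t)+\gH_\pi[a_t\mid s_t]$ for the entropy-regularized return, the quantity to bound is $\lvert \E_{P_\text{source}}[f]-\E_{P_\text{target}}[f]\rvert$. Since by definition the entropy-regularized return of every trajectory has magnitude at most $R_\text{max}$, the elementary inequality $\lvert \E_{P}[f]-\E_{Q}[f]\rvert \le 2\lVert f\rVert_\infty\, D_\text{TV}(P,Q)$ gives
\begin{equation*}
  \left\lvert \E_{P_\text{source}}[f]-\E_{P_\text{target}}[f]\right\rvert \;\le\; 2R_\text{max}\,D_\text{TV}(P_\text{source},P_\text{target}).
\end{equation*}

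Next I would apply Pinsker's inequality, $D_\text{TV}(P_\text{source},P_\text{target})\le\sqrt{\tfrac12\,\kl{P_\text{source}}{P_\text{target}}}$, and evaluate the trajectory-level KL. Because $P_\text{source}$ and $P_\text{target}$ share the factors $p_1(s_1)$ and $\pi(a_t\mid s_t)$, the chain rule for KL divergence makes all of these terms cancel, leaving
\begin{equation*}
  \kl{P_\text{source}}{P_\text{target}} \;=\; \E_{P_\text{source}}\!\left[\sum_t \kl{p_\text{source}(s_{t+1}\mid s_t,a_t)}{p_\text{target}(s_{t+1}\mid s_t,a_t)}\right] \;\le\; \epsilon,
\end{equation*}
where the final inequality is precisely the defining property of $\Pi_\text{no exploit}$ (the expectation there being over the state–action visitation of $\pi$ under the source dynamics). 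Chaining the two displays with Pinsker then yields $\lvert \E_{P_\text{source}}[f]-\E_{P_\text{target}}[f]\rvert \le 2R_\text{max}\sqrt{\epsilon/2}$, which is the claim.

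There is no deep difficulty here; the care is essentially all bookkeeping. One must make sure the expectation in the KL decomposition is over trajectories drawn from the \emph{source} dynamics, so that it matches the definition of $\Pi_\text{no exploit}$, and one must track the constant so that the leading factor comes out as $2R_\text{max}$ rather than $R_\text{max}$ (using $\lVert f\rVert_\infty\le R_\text{max}$ and the factor $2$ relating $L^1$ distance to total variation). A minor technical point is justifying the chain-rule manipulation in the possibly infinite-horizon setting, but this is standard since the chain rule for KL holds in $[0,\infty]$ and the right-hand side is finite whenever $\pi\in\Pi_\text{no exploit}$.
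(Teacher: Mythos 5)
Your proof is correct and follows essentially the same route as the paper's: a H\"older/total-variation bound on the difference of expectations of the entropy-regularized return, then Pinsker's inequality, then the constraint defining $\Pi_\text{no exploit}$ to bound the trajectory-level KL by $\epsilon$. If anything, you are slightly more careful than the paper, which skips the chain-rule cancellation identifying $\kl{p_\text{source}(\tau)}{p_\text{target}(\tau)}$ with the expected sum of per-step conditional KLs appearing in the definition of $\Pi_\text{no exploit}$.
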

This Lemma proves that all policies in $\Pi_\text{no exploit}$ satisfy the same condition as the optimal policy (Assumption~\ref{assumption:opt}).
\begin{proof}
To simplify notation, define $\tilde{r}(s, a) = r(s, a) - \log \pi(a \mid s)$. We then apply Holder's inequality and Pinsker's inequality to obtain the desired result:
\begin{align*}
    \E_{p_\text{source}} \left[\sum \tilde{r}(s_t, a_t) \right] - \E_{p_\text{target}} \left[\sum \tilde{r}(s_t, a_t) \right] &= \sum_\tau (p_\text{source}(\tau) - p_\text{target}(\tau)) \left(\sum \tilde{r}(s_t, a_t) \right) \\
    & \le \|\sum \tilde{r}(s_t, a_t) \|_\infty \cdot \| p_\text{source}(\tau) - p_\text{target}(\tau) \|_1 \\
    & \le \left( \max_\tau \sum r(s_t, a_t) \right) \cdot 2 \sqrt{\frac{1}{2} \kl{p_\text{source}(\tau)}{p_\text{target}(\tau)}} \\
    & \le 2 R_\text{max} \sqrt{\epsilon / 2}.
\end{align*}
\end{proof}

We restate our main result:
\renewcommand\thetheorem{4.1}
\begin{theorem}[Repeated from main text]
Let $\pi_\text{DARC}^*$ be the policy that maximizes the modified (entropy-regularized) reward in the source domain, let $\pi^*$ be the policy that maximizes the (unmodified, entropy-regularized) reward in the target domain, and assume that $\pi^*$ satisfies Assumption~\ref{assumption:opt}. Then $\pi_\text{DARC}^*$ receives near-optimal (entropy-regularized) reward on the target domain:
\begin{equation*}
    \E_{p_\text{target}, \pi_\text{DARC}^*} \left[ \sum r(s_t, a_t) + \gH[a_t \mid s_t] \right] \ge \E_{p_\text{target}, \pi^*} \left[ \sum r(s_t, a_t) + \gH[a_t \mid s_t] \right] - 4 R_\text{max}\sqrt{\epsilon / 2}.
\end{equation*}
\end{theorem}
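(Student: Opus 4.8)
The plan is to chain together the two lemmas of this section with Assumption~\ref{assumption:opt}. Write $J_X(\pi) \triangleq \E_{p_X,\pi}\!\left[\sum_t r(s_t,a_t) + \gH[a_t\mid s_t]\right]$ for the entropy-regularized return of $\pi$ evaluated under dynamics $X\in\{\text{source},\text{target}\}$. First I would apply the preceding (KKT) lemma to recast $\pi_\text{DARC}^*$: rather than the maximizer of the modified reward in Eq.~\ref{eq:kl}, it is equivalently the maximizer of $J_\text{source}(\pi)$ over the feasible set $\Pi_\text{no exploit}$; in particular $\pi_\text{DARC}^* \in \Pi_\text{no exploit}$. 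Next I would record that Assumption~\ref{assumption:opt} is exactly the statement that $\pi^*$ also lies in $\Pi_\text{no exploit}$ --- i.e.\ $\pi^*$ does not exploit the dynamics discrepancy --- choosing, if several reward-optimal target policies exist, one for which this holds.

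Given these two memberships, the result follows from three inequalities. Applying Lemma~\ref{lemma:dynamics-rewards} to $\pi_\text{DARC}^* \in \Pi_\text{no exploit}$ gives $J_\text{target}(\pi_\text{DARC}^*) \ge J_\text{source}(\pi_\text{DARC}^*) - 2R_\text{max}\sqrt{\epsilon/2}$; optimality of $\pi_\text{DARC}^*$ over $\Pi_\text{no exploit}$ together with $\pi^* \in \Pi_\text{no exploit}$ gives $J_\text{source}(\pi_\text{DARC}^*) \ge J_\text{source}(\pi^*)$; and applying Lemma~\ref{lemma:dynamics-rewards} to $\pi^* \in \Pi_\text{no exploit}$ gives $J_\text{source}(\pi^*) \ge J_\text{target}(\pi^*) - 2R_\text{max}\sqrt{\epsilon/2}$. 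Concatenating,
\begin{align*}
J_\text{target}(\pi_\text{DARC}^*) &\ge J_\text{source}(\pi_\text{DARC}^*) - 2R_\text{max}\sqrt{\epsilon/2} \\
&\ge J_\text{source}(\pi^*) - 2R_\text{max}\sqrt{\epsilon/2} \\
&\ge J_\text{target}(\pi^*) - 4R_\text{max}\sqrt{\epsilon/2},
\end{align*}
which is the claimed bound.

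The step I expect to require the most care is the bridge between Assumption~\ref{assumption:opt} as stated and what the argument uses. The assumption bounds the \emph{reward} gap of $\pi^*$ across the two domains, whereas the proof needs $\pi^*$ to sit inside $\Pi_\text{no exploit}$, a bound on the \emph{dynamics} discrepancy $\E_{p_\text{source},\pi^*}\!\left[\sum_t \kl{p_\text{source}}{p_\text{target}}\right] \le \epsilon$; since Lemma~\ref{lemma:dynamics-rewards} only shows the dynamics bound implies the reward bound (not the converse), I would either state the assumption directly as ``$\pi^* \in \Pi_\text{no exploit}$'' (the reward-gap phrasing being meant to encode precisely this) or carry $\epsilon$ through explicitly as the discrepancy budget, noting that the $\epsilon$ in the theorem statement is precisely the one delivered by the KKT lemma. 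A minor secondary point is entropy bookkeeping: Lemma~\ref{lemma:dynamics-rewards} is phrased for the entropy-regularized return, so invoking it for \emph{both} $\pi_\text{DARC}^*$ and $\pi^*$ (rather than invoking the reward-only Assumption~\ref{assumption:opt} directly for the last step) keeps the $\gH$ terms consistent along the whole chain, leaving Assumption~\ref{assumption:opt} to do only the job of guaranteeing feasibility of $\pi^*$.
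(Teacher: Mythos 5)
Your proof is correct and is essentially identical to the paper's: the paper likewise uses Assumption~\ref{assumption:opt} to place $\pi^*$ in $\Pi_\text{no exploit}$, invokes the optimality of $\pi_\text{DARC}^*$ over that set on the source dynamics, and applies Lemma~\ref{lemma:dynamics-rewards} to both policies to pay $2R_\text{max}\sqrt{\epsilon/2}$ twice. The subtlety you flag --- that the assumption as literally stated bounds the reward gap while the argument needs membership in $\Pi_\text{no exploit}$, and Lemma~\ref{lemma:dynamics-rewards} only gives the implication in the other direction --- is a real gap that the paper's own proof also elides with the bare assertion that the assumption ``guarantees'' feasibility of $\pi^*$, so your proposed fix (reading the assumption as encoding the KL-discrepancy constraint directly) is the right repair.
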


\begin{proof}
Assumption~\ref{assumption:opt} guarantees that the optimal policy for the target domain, $\pi^*$, lies within $\Pi_\text{no exploit}$. Among all policies in $\Pi_\text{no exploit}$, $\pi_\text{DARC}^*$ is (by definition) the one that receives highest reward on the source dynamics, so
\begin{equation*}
    \E_{p_\text{source}, \pi_\text{DARC}^*} \left[ \sum r(s_t, a_t) + \gH[a_t \mid s_t] \right] \ge \E_{p_\text{source}, \pi^*} \left[ \sum r(s_t, a_t) + \gH[a_t \mid s_t] \right]. \label{eq:proof-1}
\end{equation*}
Since the both $\pi_\text{DARC}^*$ and $\pi^*$ lie inside the constraint set, Lemma~\ref{lemma:dynamics-rewards} dictates that their rewards on the target domain differ by at most $2 R_\text{max}\sqrt{\epsilon / 2}$ from their source domain rewards. In the worst case, the reward for $\pi_\text{DARC}^*$ decreases by this amount and the reward for $\pi^*$ increases by this amount:
\begin{align*}
 \E_{p_\text{source}, \pi_\text{DARC}^*} \left[ \sum r(s_t, a_t) + \gH[a_t \mid s_t] \right] & \le \E_{p_\text{target}, \pi_\text{DARC}^*} \left[ \sum r(s_t, a_t) + \gH[a_t \mid s_t] \right] + 2 R_\text{max} \sqrt{\epsilon / 2} \\
 \E_{p_\text{source}, \pi^*} \left[ \sum r(s_t, a_t) + \gH[a_t \mid s_t] \right] & \ge \E_{p_\text{target}, \pi^*} \left[ \sum r(s_t, a_t) + \gH[a_t \mid s_t] \right] - 2 R_\text{max} \sqrt{\epsilon / 2} \\
\end{align*}
Substituting these inequalities on the LHS and RHS of Eq.~\ref{eq:proof-1} and rearranging terms, we obtain the desired result.
\end{proof}

\section{The Special Case of an Observation Model}
\label{appendix:observation-model}

To highlight the relationship between domain adaptation of dynamics versus observations, we now consider a special case. In this subsection, we will assume that the state $s_t \triangleq (z_t, o_t)$ is a combination of the system latent state $z_t$ (e.g., the poses of all objects in a scene) and an observation $o_t$ (e.g., a camera observation). We will define $q(o_t \mid z_t)$ and $p(o_t \mid z_t)$ as the \emph{observation models} for the source and target domains. In this special case, we can decompose the KL objective (Eq.~\ref{eq:kl}) into three terms:
\begin{align*}
    \kl{q}{p} = -\E_q \bigg[\sum_t &\underbrace{r(s_t, a_t) + \gH_\pi[a_t \mid s_t]}_{\text{MaxEnt RL objective}} + \underbrace{\log p_{\text{target}}(o_t \mid z_t) - \log p_{\text{source}}(o_t \mid z_t)}_{\text{Observation Adaptation}} \\
    & +\underbrace{\log p_{\text{target}}(z_{t+1} \mid z_t, a_t) - \log p_{\text{source}}(z_{t+1} \mid z_t, a_t)}_{\text{Dynamics Adaptation}}  \bigg].
\end{align*}
Prior methods that perform observation adaptation~\citep{bousmalis2018using, gamrian2018transfer} effectively minimize the observation adaptation term,\footnote{\citet{tiao2018cycle} show that observation adaptation using CycleGan~\citep{zhu2017unpaired} minimizes a Jensen-Shannon divergence. Assuming sufficiently expressive models, the Jensen-Shannon divergence and the reverse KL divergence above have the same optimum.}
but ignore the effect of dynamics. In contrast, the $\Delta r$ reward correction in our method provides one method to address both dynamics and observations. These approaches could be combined; we leave this as future work.

\clearpage
\section{Additional Experiments}
\label{appendix:ablation}

\begin{figure}[t]
    \centering
    \includegraphics[width=\linewidth]{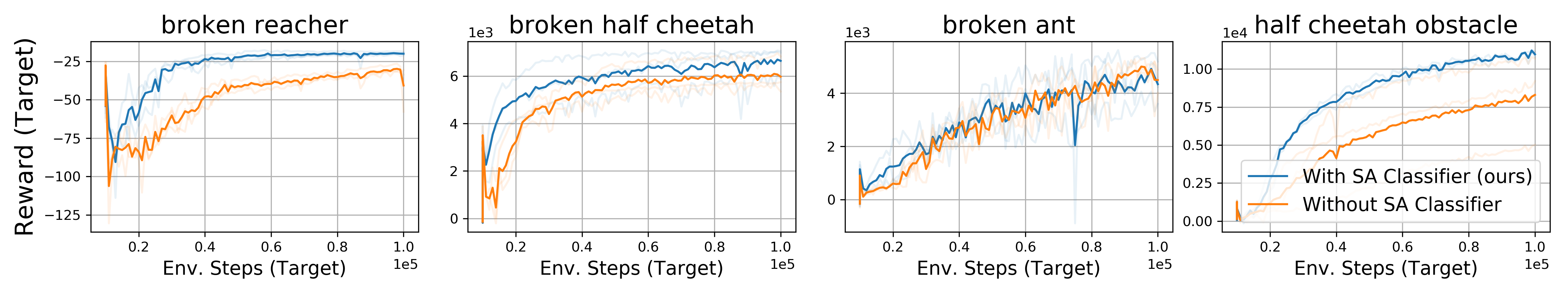}
    \caption{\textbf{Importance of using two classifiers}: Results of the ablation experiment from Fig.~\ref{fig:ablation} (left) on all environments.}
    \label{fig:more-classifier}
\end{figure}

\begin{figure}[t]
    \centering
    \includegraphics[width=\linewidth]{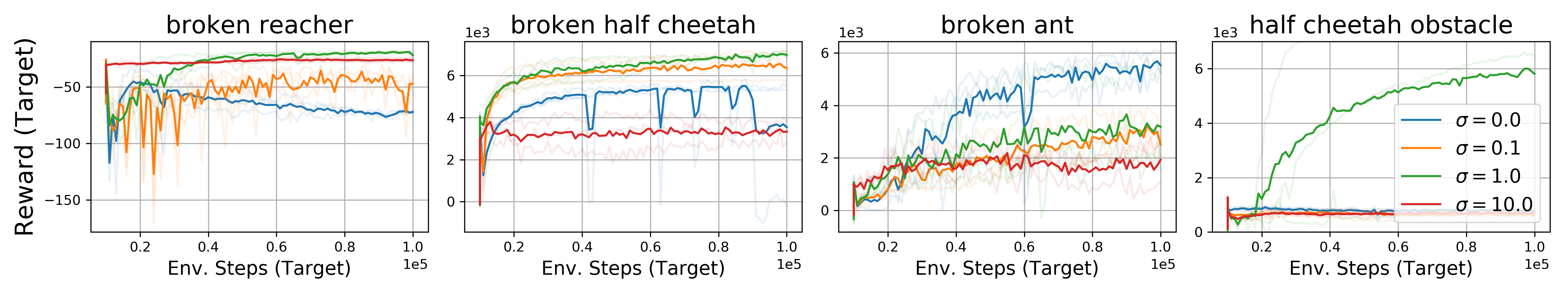}
    \caption{\textbf{Importance of regularizing the classifiers}: Results of the ablation experiment from Fig.~\ref{fig:ablation} (right) on all environments.}
    \label{fig:more-noise}
\end{figure}

\begin{wrapfigure}[10]{R}{0.5\textwidth}
    \centering
    \vspace{-2em}
    \includegraphics[width=\linewidth]{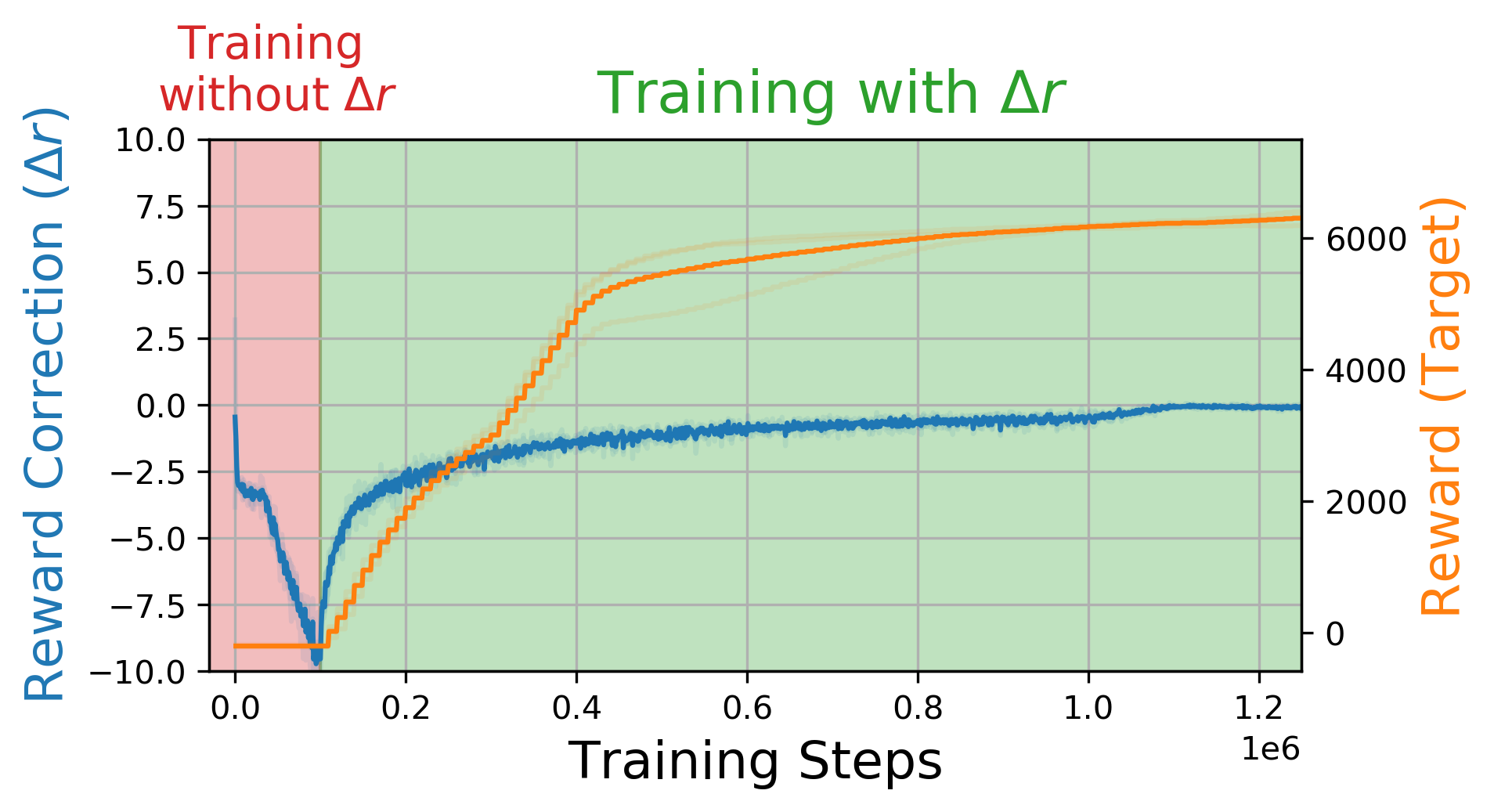}
    \vspace{-1em}
    \caption{Copy of Fig.~\ref{fig:delta-r} overlaid with the target domain reward.}
    \label{fig:delta-r-reward}
\end{wrapfigure}

Figures~\ref{fig:more-classifier} and~\ref{fig:more-noise} show the results of the ablation experiment from Fig.~\ref{fig:ablation} run on all environments. The results support our conclusion in the main text regarding the importance of using two classifiers and using input noise.
Figure~\ref{fig:delta-r-reward} is a copy of Fig.~\ref{fig:delta-r} from the main text, modified to also show the agent's reward on the target domain. We observe that the reward does not start increasing until we start using DARC.

\section{Experiment Details and Hyperparameters}

Our implementation of DARC is built on top of the implementation of SAC from~\citet{TFAgents}. Unless otherwise specified, all hyperparameters are taken from~\citet{TFAgents}. All neural networks (actor, critics, and classifiers) have two hidden layers with 256-units each and ReLU activations. Since we ultimately will use the \emph{difference} in the predictions of the two classifiers, we use a residual parametrization for the SAS classifier $q(\text{target} \mid s_t, a_t, s_{t+1})$. Using $f_\text{SAS}(s_t, a_t, s_{t+1}), f_\text{SA}(s_t, a_t) \in \mathbbm{R}^2$ to denote the outputs of the two classifier networks, we compute the classifier predictions as follows:
\begin{align*}
    q_{\theta_{\text{SA}}}(\cdot \mid s_t, a_t) &= \textsc{SoftMax}(f_\text{SA}(s_t, a_t)) \\
    q_{\theta_{\text{SAS}}}(\cdot \mid s_t, a_t, s_{t+1}) &= \textsc{SoftMax}(f_\text{SAS}(s_t, a_t, s_{t+1}) + f_\text{SA}(s_t, a_t))
\end{align*}
For the SAS classifier we propagate gradients back through both networks parameters, $\theta_\text{SAS}$ and $\theta_\text{SA}$. Both classifiers use Gaussian input noise with $\sigma = 1$. Optimization of all networks is done with Adam~\citep{kingma2014adam} with a learning rate of 3e-4 and batch size of 128. Most experiments with DARC collected 1 step in the target domain every 10 steps in the source domain (i.e., $r = 10$). The one exception is the half cheetah obstacle domain, where we tried increasing $r$ beyond 10 to 30, 100, 300, and 1000. We found a large benefit from increasing $r$ to 30 and 100, but did not run the other experiments long enough to draw any conclusions. Fig.~\ref{fig:broken-robots} uses $r = 30$ for half cheetah obstacle. We did not tune this parameter, and expect that tuning it would result in significant improvements in sample efficiency.

We found that DARC was slightly more stable if we warm-started the method by applying RL on the source task \emph{without} $\Delta r$ for the first $t_\text{warmup}$ iterations. We used $t_\text{warmup} = 1e5$ for all tasks except the broken reacher, where we used $t_\text{warmup} = 2e5$. This discrepancy was caused by a typo in an experiment, and subsequent experiments found that DARC is relatively robust to different values of~$t_\text{warmup}$; we did not tune this parameter.

\subsection{Baselines}

The \textbf{RL on Source} and \textbf{RL on Target} baselines are implemented identically to our method, with the exception that $\Delta r$ is not added to the reward function. The \textbf{RL on Target (10x)} is identical to RL on Target, with the exception that we take 10 gradient steps per environment interaction (instead of 1). The \textbf{Importance Weighting} baseline estimates the importance weights as $p_\text{target}(s_{t+1} \mid s_t, a_t) / p_\text{source}(s_{t+1} \mid s_t, a_t) \approx \exp(\Delta r)$. The importance weight is used to weight transitions in the SAC actor and critic losses.

\paragraph{PETS~\citep{chua2018deep}} The \textbf{PETS} baseline is implemented using the default configurations used by \citep{chua2018deep} for the environments evaluated. The \texttt{broken-half-cheetah} environment uses the hyperparameters as used by the \texttt{half-cheetah} environment in \citep{chua2018deep}. The \texttt{broken-ant} environment uses the same set of hyperparameters, namely: task horizon = 1000, number of training iterations = 300, number of planning (real) steps per iteration = 30, number of particles to be used in particle propagation methods = 20. The PETS codebase can be found at \url{https://github.com/kchua/handful-of-trials}.

\paragraph{MBPO~\citep{janner2019trust}} We used the authors implementation with the default hyperparameters: \url{https://github.com/JannerM/mbpo}.
We kept the environment configurations the same as their default unmodified MuJoCo environments, except for the domain and task name. We added our custom environment xmls in \texttt{mbpo/env/assets/} folder, and their corresponding environment python files in the \texttt{mbpo/env/ folder}. Their static files were added under \texttt{mbpo/static/}. These environments can be registered as gym environments in the init file under \texttt{mbpo\_odrl/mbpo/env/} or can be initialized directly in \texttt{softlearning/environments/adapters/gym adapter.py}. We set the time limit to \texttt{max\_episode\_steps=1000} for the Broken Half Cheetah, Broken Ant and Half Cheetah Obstacle environments and to 100 for the Broken Reacher environment.

\subsection{Environments}

\paragraph{Broken Reacher} This environment uses the 7DOF robot arm from the Pusher environment in OpenAI Gym. The observation space is the position and velocities of all joints and the goal. The reward function is 
\begin{equation*}
    r(s, a) = -\frac{1}{2}\|s_\text{end effector} - s_\text{goal}\|_2 - \frac{1}{10}\|a\|_2^2,
\end{equation*}
and episodes are 100 steps long. In the target domain the 2nd joint (0-indexed) is broken: zero torque is applied to this joint, regardless of the commanded torque. 

\paragraph{Broken Half Cheetah}
This environment is based on the HalfCheetah environment in OpenAI Gym. We modified the observation to include the agent's global X coordinate so the agent can infer its relative position to the obstacle. Episodes are 1000 steps long. In the target domain the 0th joint (0-indexed) is broken: zero torque is applied to this joint, regardless of the commanded torque. 

\paragraph{Broken Ant}
This environment is based on the Ant environment in OpenAI Gym. We use the standard termination condition and cap the maximum episode length at 1000 steps. In the target domain the 3rd joint (0-indexed) is broken: zero torque is applied to this joint, regardless of the commanded torque.

In all the broken joint environments, we choose which joint to break to computing which joint caused the ``RL on Source'' baseline to perform worst on the target domain, as compared with the ``RL on Target'' baseline.

\paragraph{Half Cheetah Obstacle}
This environment is based on the HalfCheetah environment in OpenAI Gym. Episodes are 1000 steps long. We modified the standard reward function to use the absolute value in place of the velocity, resulting the following reward function:
\begin{equation*}
    r(s, a) = s_\text{x vel} \cdot \Delta t - \|a\|_2^2,
\end{equation*}
where $s_\text{x vel}$ is the velocity of the agent along the forward-aft axis and $\Delta t = 0.01$ is the time step of the simulator. In the target domain, we added a wall at $x = -3m$, roughly 3 meters behind the agent.

\paragraph{Humanoid}
Used for the experiment in Fig.~\ref{fig:falling}, we used a modified version of Humanoid from OpenAI Gym. The source domain modified this environment to ignore the default termination condition and instead terminate after exactly 300 time steps. The target domain uses the unmodified environment, which terminates when the agent falls.

\subsection{Figures}
Unless otherwise noted, all experiments were run with three random seeds. Figures showing learning curves (Figures~\ref{fig:broken-robots},~\ref{fig:ablation},~\ref{fig:delta-r},~\ref{fig:more-classifier}, and~\ref{fig:more-noise}) plot the \emph{mean} over the three random seeds, and also plot the results for each individual random seed with semi-transparent lines.

\subsection{Archery Experiment}
\label{appendix:archery}
We used a simple physics model for the archery experiment. The target was located 70m North of the agent, and wind was applied along the East-West axis. The system dynamics:
\begin{equation*}
    s_{t+1} = 70 \sin(\theta) + f / \cos(\theta)^2 \qquad \begin{cases} f \sim \gN(\mu=1, \sigma=1) & \text{in the source domain} \\ f \sim \gN(\mu=0, \sigma=0.3) & \text{in the target domain} \end{cases}
\end{equation*}

We trained the classifier by sampling $\theta \sim \gU[-2, 2]$ (measured in degrees) for 10k episodes in the source domain and 10k episodes in the target domain. The classifier was a neural network with 1 hidden layer with 32 hidden units and ReLU activation. We optimized the classifier using the Adam optimizer with a learning rate of 3e-3 and a batch size of 1024. We trained until the validation loss increased for 3 consecutive epochs, which took 16 epochs in our experiment. We generated Fig.~\ref{fig:arrow} by sampling 10k episodes for each value of $\theta$ and aggregating the rewards using \mbox{$J(\theta) = \log \E_{p(s' \mid \theta)}[\exp(r(s'))]$}. We found that aggregating rewards by taking the mean did not yield meaningful results, perhaps because the mean corresponds to a (possibly loose) lower bound on $J$ (see Appendix~\ref{appendix:risk-sensitive}).

\end{document}